\documentclass[journal]{IEEEtai}

\usepackage[colorlinks,urlcolor=blue,linkcolor=blue,citecolor=blue]{hyperref}

\usepackage{color,array}
\usepackage{amsmath, amssymb}
\usepackage{graphicx}
\usepackage{booktabs}
\usepackage{algorithm, algorithmicx, algpseudocode}
\usepackage{comment}
\usepackage{bm}
\usepackage{multirow}
\usepackage{subcaption} % Ensure this is in your preamble
\newtheorem{theorem}{Theorem}

\newtheorem{corollary}{Corollary}
\newtheorem{proposition}{Proposition}

\newcommand{\autoformer}{\texttt{Autoformer}}
\newcommand{\informer}{\texttt{Informer}}
\newcommand{\patchtst}{\texttt{PatchTST}}
\newcommand{\fedformer}{\texttt{FEDformer}}
\newcommand{\pyraformer}{\texttt{Pyraformer}}

\newcommand{\dkoopformer}{\texttt{DeepKoopFormer}}

\newcommand{\sigmoid}{\sigma}
\newcommand{\K}{\mathcal K}

\newcommand{\ssdkf}{\texttt{scalar-gated}}
\newcommand{\pmcdkf}{\texttt{per-mode gated}}
\newcommand{\mlpdkf}{\texttt{MLP-shaped} spectral mapping}
\newcommand{\lrdkf}{\texttt{low-rank} Koopman}

\newcommand{\dlinear}{\texttt{DLinear}}

\newcommand{\ldkf}{\texttt{Learnable-DeepKoopFormer}}

\usepackage{bm} % in preamble

\usepackage{hyperref}

\usepackage{bm}

\begin{document}

\onecolumn
\large
This work has been submitted to the IEEE for possible publication.
Copyright may be transferred without notice, after which this version
may no longer be accessible.
\twocolumn
\normalsize

\title{Learnable Koopman-Enhanced Transformer-Based Time Series Forecasting with Spectral Control}

\author{Ali Forootani*, \IEEEmembership{Senior Member,~IEEE,} Raffaele Iervolino, \IEEEmembership{Senior Member,~IEEE,}
        % <-this % stops a space
\thanks{Ali Forootani is with Max Planck Institute of Geoanthropology Kahlaische Str. 10, 07745 Jena, Germany (\texttt{email:}forootani@gea.mpg.de/aliforootani@ieee.org).}% <-this % stops a space
\thanks{Raffaele Iervolino is with Department of Electrical Engineering and Information Technology, University of Naples, 80125 Napoli, Italy(\texttt{email:}rafierv@unina.it).}

%\thanks{ Mohammad Khosravi is with Delft Center for Systems and Control, Mekelweg 2, Delft, 2628 CD, The Netherlands, \texttt{Email: mohammad.khosravi@tudelft.nl}}
%\thanks{Masoud Barati is with School of Engineering, University of Pittsburgh Swanson, O'Hara Street Benedum Hall of Engineering, Pittsburgh, 3700, Pennsylvania, United States\texttt{Email: masoud.barati@pitt.edu}}
}

%\markboth{Journal of IEEE Transactions on Artificial Intelligence, Vol. 00, No. 0, Month 2020}
%{First A. Author \MakeLowercase{\textit{et al.}}: Bare Demo of IEEEtai.cls for IEEE Journals of IEEE Transactions on Artificial Intelligence}

\maketitle

\begin{abstract}
This paper proposes a unified family of learnable Koopman operator parameterizations that integrate linear dynamical systems theory with modern deep learning forecasting architectures. We introduce four learnable Koopman variants—\ssdkf, \pmcdkf, \mlpdkf, and \lrdkf~operators—which generalize and interpolate between strictly stable Koopman operators and unconstrained linear latent dynamics. Our formulation enables explicit control over the spectrum, stability, and rank of the linear transition operator while retaining compatibility with expressive nonlinear backbones such as \patchtst, \autoformer, and \informer.

We evaluate the proposed operators in a large-scale benchmark that also includes LSTM, DLinear, and simple diagonal State-Space Models (SSMs), as well as lightweight transformer variants. Experiments across multiple horizons and patch lengths show that learnable Koopman models provide a favorable bias–variance trade-off, improved conditioning, and more interpretable latent dynamics. We provide a full spectral analysis, including eigenvalue trajectories, stability envelopes, and learned spectral distributions. Our results demonstrate that learnable Koopman operators are effective, stable, and theoretically principled components for deep forecasting.
\end{abstract}

\begin{IEEEkeywords}
Time Series Forecasting, Transformer Models, \autoformer, \informer, \patchtst, Koopman Operator.
\end{IEEEkeywords}

%%%%%%%%%%%%%%%%%%%%%%%%%%%%%%%%%%%%%%%%%%%%%%%%%%%
%%%%%%%%%%%%%%%%%%%%%%%%%%%%%%%%%%%%%%%%%%%%%%%%%%%
%%%%%%%%%%%%%%%%%%%%%%%%%%%%%%%%%%%%%%%%%%%%%%%%%%%

\section{Introduction}
\label{sec:intro}

Time series forecasting is a foundational task across science and engineering, underpinning applications in retail demand prediction \cite{Croston1972,wen2017multi,salinas2020deepar}, traffic flow management \cite{lv2014traffic,li2018diffusion}, energy balancing \cite{DIMOULKAS2019,Saxena2019,forootani_cadnn_2025}, and financial volatility modeling \cite{callot2017modeling,neurips_18_long_tail}. The increasing scale, dimensionality, and non-stationarity of modern datasets challenge traditional statistical models \cite{hyndman2018forecasting}, driving the development of deep learning architectures that offer improved expressiveness, scalability, and feature abstraction.

Early deep learning models such as RNNs and LSTMs \cite{hochreiter1997long,salinas2020deepar}, as well as CNN and TCN architectures \cite{lecun1995convolutional,BaiTCN2018}, demonstrated strong forecasting capabilities by learning temporal dependencies directly from data. More recently, Transformer-based architectures \cite{wu2021autoformer,haoyietal-informer-2021,nie2022time,vaswani2017attention} have become the dominant paradigm due to their ability to model long-range dependencies via self-attention. Their effectiveness has been validated in large-scale forecasting competitions such as M4 and M5 \cite{smyl2018m4,makridakis2021m5}.

To improve scalability and accuracy, numerous specialized Transformers have been proposed. \informer~\cite{informer,haoyietal-informer-2021} introduces \texttt{ProbSparse} attention to focus on the most informative queries; \autoformer~\cite{autoformer,wu2021autoformer} integrates decomposition blocks to explicitly handle trend and seasonal patterns; \fedformer~\cite{fedformer,zhou2022fedformer} employs Fourier spectral mixing to reduce complexity; and \pyraformer~\cite{pyraformer,liu2021pyraformer} introduces a pyramidal hierarchy to capture multi-scale structures. More recently, \patchtst~\cite{nie2022time} showed that simple architectural principles---patching, channel-independence, and lightweight attention---are sufficient to outperform many deeper Transformer variants \cite{multichannel,dlinear}.

Following the surprising results of \dlinear~\cite{dlinear,zeng2023transformers}, which showed that a channel-wise linear model can outperform many Transformers on standard benchmarks, a central question has emerged: \emph{how much architectural complexity is actually necessary for strong time-series forecasting performance?} Moreover, deep models still suffer from limited interpretability, sensitivity to distribution shifts \cite{kuznetsov2020discrepancy,Liu2022NonstationaryTR,kim2022reversible}, and a lack of explicit dynamical structure, limiting their robustness in high-stakes settings.

To address these issues, recent research has sought to embed domain priors and system dynamics into learning architectures. Hybrid statistical–deep models \cite{wang2019deepfactors,sen2019think,smyl2018m4}, continuous-time neural ODE frameworks \cite{chen19,vialard20}, and physics-inspired models aim to introduce structural inductive biases. However, many such approaches still require strong assumptions on fixed dynamics and face scalability challenges in multivariate or highly nonlinear environments.

\subsection{Koopman Operators and Deep Learning}

The \textit{Koopman operator} offers a linear lens for analyzing nonlinear dynamical systems by evolving observables rather than states. Formally, it is an infinite-dimensional linear operator acting on functions of the state, enabling the use of linear spectral tools in settings that may be nonlinear, nonstationary, or even chaotic \cite{Mezic2017book, khosravi2023representer}. Classical data-driven approaches such as Dynamic Mode Decomposition (DMD) approximate the Koopman spectrum for prediction and modal analysis \cite{williams2015data,Kutz2016book}, but tend to degrade under high-dimensional, noisy, or multivariate real-world data. Dynamic Mode Decomposition (DMD) offer practical means to approximate the operator from measurements and facilitate modal analysis, prediction, and control in complex systems~\cite{Drgona2022,Skomski2021,NEURIPS2021_c9dd73f5}.

Deep learning has revitalized Koopman theory by learning nonlinear embeddings in which latent dynamics evolve linearly. Neural Koopman models typically employ an encoder $\mathcal{E}_\theta$ to produce latent coordinates $z_t=\mathcal{E}_\theta\big(X_t\big)$ from an input window $X_t$, and a linear propagator $ z_{t+1}=\K  z_t$, often coupled with nonlinear decoders, spectral regularization, and stability constraints \cite{Lusch2018,Yeung2019,NIPS2017_3a835d32,lusch2018deep,azencot2020forecasting,morton2018deep,takeishi2017learning}. These advances have led to interpretable and stable rollout predictions, with recent extensions exploring low-rank structure, symmetry priors, and multiresolution latent spaces \cite{brunton2021modern}. However, most architectures rely on shallow networks and struggle to model long-range, non-local temporal dependencies that modern Transformer-based models address effectively.

%%%%%%%%%%%%%%%%%%%%%%%%%%%%%%%%%%%%%%%%%%%%%%%%%%%%

%\subsection{Koopman–Transformer Hybrids}

Recent work has begun incorporating Koopman operators into Transformer architectures to obtain stable and interpretable latent dynamics, so called \dkoopformer    \cite{forootani2025deepkoopformer}.

Compared to existing Koopman-based forecasting models \cite{lusch2018deep,takeishi2017learning,azencot2020forecasting}, \dkoopformer~is the first to offer provable spectral decay, Lyapunov-stable propagation, normality of the transition matrix, and closed-form error control—all within a general, extensible encoder–propagator–decoder pipeline aligned with the goals of physics-informed forecasting \cite{PIML2021,Truong2023} and interpretable dynamical systems modeling \cite{Brunton2016plosone,Korda2016arxiv}.

%These hybrid models aim to combine (i) the expressive long-range modeling capacity of Transformers, (ii) the stability and interpretability of operator-theoretic dynamics, (iii) robustness under distributional shifts, and (iv) explicit analysis through Koopman spectra and modes.

%Compared to existing Koopman-based forecasting models \cite{lusch2018deep,takeishi2017learning,azencot2020forecasting}, \dkoopformer~is the first to offer provable spectral decay, Lyapunov-stable propagation, normality of the transition matrix, and closed-form error control—all within a general, extensible encoder–propagator–decoder pipeline aligned with the goals of physics-informed forecasting \cite{PIML2021,Truong2023} and interpretable dynamical systems modeling \cite{Brunton2016plosone,Korda2016arxiv}.

%%%%%%%%%%%%%%%%%%%%%%%%%%%%%%%%%%%%%%%%%%%%%%%%

\subsection{Contributions: Learnable Koopman--Transformer Framework}

Although initial results demonstrate strong potential, most existing designs rely on fixed or heavily constrained Koopman operators, leaving a large space of learnable operator parameterizations largely unexplored.
Indeed, existing Koopman–neural models typically combine nonlinear feature encoders with a latent linear operator \cite{takeishi2017learning,lusch2018deep}. However, the operator itself is usually learned \emph{without explicit spectral parameterization}: it is fitted as an unconstrained linear layer or least-squares estimator, providing no mechanism to shape contraction, oscillation, or multi-scale temporal behavior. Moreover, such formulations have rarely been integrated with modern Transformer architectures, despite their demonstrated ability to model long-range dependencies in time series. This combination of \emph{spectral non‐control} and \emph{architectural underuse} limits both interpretability and dynamical expressiveness.

To address these limitations, we introduce the \ldkf{}, a unified framework that equips Transformer backbones with \emph{parameterized Koopman operators} whose spectra are learned directly from data. Our formulation includes four operator families:

\begin{itemize}
    \item \ssdkf: global trainable parameters $(\alpha,\beta)$ control spectral shifting and scaling, enabling tunable damping or persistent dynamics;
    \item  \pmcdkf: dimension-wise parameters $(\alpha_i,\beta_i)$ model anisotropic temporal responses and multi-frequency evolution;
    \item \mlpdkf: neural spectral mappings are projected onto stable linear propagators with controlled spectral radius;
    \item \lrdkf: relatively low rank factorizations of the Koopman operator to capture high-dimensional latent dynamics efficiently through structured compression.
\end{itemize}

These operator designs are embedded into lightweight and full Transformer forecasters, including \patchtst{} \cite{nie2022time}, \informer{} \cite{haoyietal-informer-2021}, and \autoformer{} \cite{wu2021autoformer}, yielding a comprehensive family of \dkoopformer ~models. Spectral stability constraints and operator regularization allow latent dynamics to be jointly optimized with representation learning while preserving interpretability: the dominant modes, stability margins, and spectral envelopes remain directly analyzable.

%We conduct an extensive benchmark spanning constrained, learnable, and unconstrained Koopman variants \color{red}[ACTUALLY, in the paper only the learnable variants are detailed, the "constrained" and the "unconstrained" have not been defined]\color{black}, compared against widely used baselines: LSTM, \dlinear{} \cite{dlinear,zeng2023transformers}, and a diagonal State-Space Model (SSM) \cite{gu2021combining}. Alongside forecasting performance, we provide the first systematic \textit{spectral evaluation} of Koopman–Transformers, including density characterization, training trajectories of eigenvalues, stability envelopes, and horizon sensitivity.

We conduct an extensive simulation benchmark comparing the proposed learnable Koopman variants against two reference Koopman baselines---a spectrally constrained variant and an unconstrained variant---together with widely used forecasting models including LSTM, \dlinear~\cite{dlinear,zeng2023transformers}, and a diagonal State-Space Model (SSM)~\cite{gu2021combining}. Alongside forecasting accuracy, we provide the first systematic \emph{spectral evaluation} of Koopman--Transformers, including density characterization, eigenvalue training trajectories, stability envelopes, and horizon sensitivity.\\
In summary, this work provides:
\begin{itemize}
    \item a unified parameterization of \textit{learnable Koopman operators} (scalar, per-mode, neural, and low-rank);
    \item the \ldkf~framework, embedding these operators into three state-of-the-art Transformer backbones;
    \item a systematic benchmark of \textit{21 models} across constrained, learnable, and unconstrained variants, plus LSTM, \dlinear, and diagonal SSM baselines;
    \item the first comprehensive \textit{spectral analysis of Koopman–Transformer models}, revealing stability properties, multi-scale expressiveness, and interpretable latent dynamics.
\end{itemize}

%%%%%%%%%%%%%%%%%%%%%%%%%%%%%%%%%%%%%%%%%%%%%%%%%%%%%

For real-world evaluation, we utilize high-dimensional datasets from multiple domains, including: the CMIP6 climate projections~\footnote{\url{https://cds.climate.copernicus.eu/datasets/projections-cordex-domains-single-levels?tab=overview}} and ERA5 reanalysis data~\footnote{\url{https://cds.climate.copernicus.eu/datasets}}, focusing on wind speed and surface pressure forecasting over Germany; a financial time series dataset~\footnote{\url{https://github.com/Chisomnwa/Cryptocurrency-Data-Analysis}} for cryptocurrency market analysis; and an electricity generation dataset~\footnote{\url{https://github.com/afshinfaramarzi/Energy-Demand-electricity-price-Forecasting/tree/main}} for modeling energy supply dynamics. These datasets collectively span chaotic, periodic, and stochastic regimes, allowing for a comprehensive assessment of model accuracy, stability, and generalization across domains.

Compared to existing Koopman-based forecasting models \cite{lusch2018deep,takeishi2017learning,azencot2020forecasting}, \ldkf~is the first to offer provable spectral decay, Lyapunov-stable propagation, normality of the transition matrix, and closed-form error control—all within a general, extensible encoder–propagator–decoder pipeline aligned with the goals of physics-informed forecasting \cite{PIML2021,Truong2023} and interpretable dynamical systems modeling \cite{Brunton2016plosone,Korda2016arxiv}.

This paper is organized as follows. Section~\ref{preliminaries_sec} provides the preliminaries required for time series forecasting. In Section~\ref{sec:learnable_koop_math}, we present \ldkf~architectures for multivariate time series forecasting. Numerical simulations are presented in Section~\ref{sec:simulations} to evaluate the performance of \ldkf versus other benchmarks. Finally, we conclude the article in Section~\ref{sec:conclusion}.

%%%%%%%%%%%%%%%%%%%%%%%%%%%%%%%%%%%%%%%%%%%%%%%%%%%
%%%%%%%%%%%%%%%%%%%%%%%%%%%%%%%%%%%%%%%%%%%%%%%%%%%
%%%%%%%%%%%%%%%%%%%%%%%%%%%%%%%%%%%%%%%%%%%%%%%%%%%

\section{\ldkf~Architecture}
\label{preliminaries_sec}

We begin by formalizing the multivariate time series forecasting task and 
introducing the key components of the proposed architecture.

Given a multivariate time series $\{x_t\}_{t=1}^T$, with $x_t \in\mathbb{R}^{d_s}$,  \ldkf~operates on context windows of length $P$ and predicts the subsequent $H$ samples. For each valid index $t$, an input segment $X_t = [x_t, \dots, x_{t+P-1}] \in \mathbb{R}^{P \times d_s}$ is paired with its future sequence $Y_t = [x_{t+P}, \dots, x_{t+P+H-1}] \in \mathbb{R}^{H \times d_s}$. The input window is passed through a Transformer encoder with positional encoding \(\mathcal{E}_\theta\) (e.g., \patchtst, \autoformer, or \informer), producing a latent representation \[ z_t = \mathcal{E}_\theta(X_t)\in\mathbb{R}^{d_{\mathrm{lat}}},
\]
where \(d_{\mathrm{lat}}\) is the latent dimension. Typically \(d_{\mathrm{lat}}=d_{\mathrm{model}}\),
and not necessarily equal to the physical dimension \(d_s\), where \(d_{\text{model}}\) denotes the hidden (embedding) dimension of the Transformer backbone, i.e., the dimensionality of the token representations and of the latent state produced by the encoder \(\mathcal{E}_\theta\). With the slight abuse of notation and for the sake of simplicity, hereinafter we assume $d_{\mathrm{model}}=d_{\mathrm{lat}}=d_s=d$.

Temporal evolution in the latent space is modeled through a learned Koopman operator $\K_\phi$, which advances the latent state linearly according to $ z_{t+1} = \K_\phi z_t$. The propagated latent state is decoded into a direct $H$-step forecast using a linear mapping $\mathcal{D}_\varphi$, yielding a vector $\hat{y}_t = \mathcal{D}_\varphi(z_{t+1}) \in \mathbb{R}^{H \cdot d}$, which is then reshaped into the output sequence $\hat{Y}_t \in \mathbb{R}^{H \times d}$.

Here, the subscript \(\theta\)  denotes the parameters of the Transformer encoder \(\mathcal{E}_\theta\), \(\phi\) collects the parameters of the Koopman propagator \(\K_\phi\) (including its spectral coefficients and orthogonal factors), and \(\varphi\) denotes the parameters of the linear decoder \(\mathcal{D}_\varphi\).  During inference, a context window \(X_t\) is encoded once, propagated through the Koopman operator \(\K_\phi\), and decoded to produce the entire prediction horizon \(H\) in a single forward pass, without autoregressive rollout.

All components are trained jointly in an end-to-end fashion using the Adam optimizer \cite{kingma2014adam},
a first-order stochastic gradient method with adaptive moment estimation. In particular, training is performed end-to-end by minimizing a forecasting loss (e.g., mean squared error) augmented with a Lyapunov-inspired penalty as follows 
\begin{align}\label{LyapEn}
    \mathcal{L}
    ~=~
    &\mathbb{E} \Big[ \Vert \widehat{Y}_t - Y_t \Vert^2 \Big]
    ~+~
    \lambda_{\mathrm{Lyap}}
    \,\mathbb{E}\Big[
      \bigl\lVert \K_\phi {\bm z}_t \bigr\rVert_P^2
      - \bigl\lVert {z}_t \bigr\rVert_P^2
    \Bigr]_+ \nonumber \\
    &=\mathcal{L}_{\mathrm{MSE}}
    ~+~
    \lambda_{\mathrm{Lyap}}\mathcal{L}_{\mathrm{Lyap}},
\end{align}
where $\|z\|_P^2 = z^\top P z$, $P \succ 0$ is a fixed positive-definite matrix, $[x]_+ = \max\{x,0\}$, and $\lambda_{\text{Lyap}} > 0$ is a scalar regularization weight controlling the strength of the Lyapunov stability penalty. %The second term penalizes unstable latent growth and promotes Koopman-consistent dynamics.\color{black}

%\begin{equation}\label{minim}
%    \mathcal{L} = \Vert \widehat{Y}_t - Y_t \Vert^2 + \lambda\, \mathrm{ReLU}\!\left(\Vert z_{t+1} \Vert^2 - \Vert z_t \Vert^2\right),
%\end{equation}
%where the second term penalizes unstable latent growth and promotes Koopman-consistent dynamics. 

%All parameters $(\theta, \phi, \K)$ \color{red}[THE parameters $\theta,\phi$ are not clearly defined]\color{black} are optimized jointly with Adam \color{red}[PUT a ref or explain what is it]\color{black}. During inference, a context window $X_t$ is encoded once, propagated through $\K$, and decoded, producing the entire horizon $H$ in a single forward pass with no autoregressive rollout.

%%%%%%%%%%%%%%%%%%%%%%%%%%%%%%%%%%%%%%%
%%%%%%%%%%%%%%%%%%%%%%%%%%%%%%%%%%%%%%%
%%%%%%%%%%%%%%%%%%%%%%%%%%%%%%%%%%%%%%%

\section{Learnable Koopman~Variants}
\label{sec:learnable_koop_math}

%Consider a multivariate time series \(\{x_t\}_{t=1}^T\) with \(x_t\in\mathbb{R}^{d_s}\),
%and form at each time index \(t\) a \emph{past} context window of length \(P\),
%\begin{align}
%X_t :&= x_{t-P+1:t} = [x_{t-P+1},\ldots,x_t]\in\mathbb{R}^{P\times d_s},
%Y_t :&= x_{t+1:t+H} = [x_{t+1},\ldots,x_{t+H}]\in\mathbb{R}^{H\times d_s}.
%\end{align}
%The encoder backbone \(\mathcal{E}_\theta\) (e.g., \patchtst, \autoformer, or \informer)
%maps the context to a latent state
%\[ z_t = \mathcal{E}_\theta(X_t)\in\mathbb{R}^{d_{\mathrm{lat}}},
%\]
%where \(d_{\mathrm{lat}}\) is the latent dimension. 
In \ldkf~we assume the latent evolution to be
modeled by a linear dynamics through a learned Koopman propagator \(\K_\phi\in\mathbb{R}^{d \times d }\),
\[
z_{t+1} = \K_\phi z_t.
\]
As shown later in the paper, to ensure identifiability and stable training, $\K_\phi$ is parameterized through
an orthogonal–diagonal–orthogonal (ODO) factorization
\begin{equation}
    \K_\phi \;=\;
    U_\phi \, \mathrm{diag}\!\left(\Sigma_\phi\right) 
    V_\phi^\top ,
    \label{eq:koop_odo}
\end{equation}
where
$U_\phi, V_\phi$ are learned orthonormal matrices obtained via
QR retraction, and
$\Sigma_\phi = (\Sigma_{\phi,1},\ldots,\Sigma_{\phi,d})^\top \in\mathbb{R}^d$ (or $\mathbb{R}^r$ in the low-rank
case, see later in the paper) the vector containing the learnable spectral coefficients.
Equation \eqref{eq:koop_odo} ensures that all nonlinearity and expressiveness
are concentrated in the spectrum $\Sigma_\phi$, while the
left/right singular directions evolve on the Stiefel manifold, i.e., the set of matrices with orthonormal columns \cite{absil2008optimization}. 

%Finally, the decoder \(\mathcal{D}_\varphi\) produces the
%multi-step forecast in a single forward pass,
%\[
%\widehat{Y}_t = D_\varphi(z_{t+1})\in\mathbb{R}^{H\times d},
%\]
%and the model is trained end-to-end by minimizing a forecasting loss (e.g., MSE)
%augmented with a Lyapunov-inspired penalty that discourages latent energy growth.
The key design freedom of our \emph{learnable Koopman architecture} lies in how
the diagonal spectral coefficients $\Sigma_\phi$ are generated.
We consider Koopman variants that share the principle
\begin{equation}
    \Sigma_{\phi,i} ~=~ \rho_{\max} \,
    \sigma_\phi\!\big(S_i\big),
    \qquad i = 1,\dots,d,
    \label{eq:spectrum_general}
\end{equation}
where $S_i\in\mathbb{R}$ are raw trainable parameters, \(\rho_{\max}\in(0,1)\) is a prescribed spectral bound, and
$\sigma_\phi(\cdot)$ is a smooth squashing map, i.e. a nonlinear “spectral shaping” function, which preserves differentiability and enables end-to-end learning while providing explicit control over contraction rates in the latent Koopman dynamics. This map is chosen to enforce 
\begin{equation}
    \lvert \Sigma_{\phi,i} \rvert < \rho_{\max}
    \quad\text{for all } i,
    \qquad
    \rho_{\max} \in (0,1),
    \label{eq:rho_max_bound}
\end{equation}
whenever spectral stability is desired, while allowing different degrees of
expressiveness.  
In particular, it is the composition of the logistic sigmoid  \(\sigma(x)=\frac{1}{1+e^{-x}}\) and a function chosen from a list of spectral shaping maps. More specifically, we introduce four learnable Koopman families that act on the vector of raw spectral parameters 
\begin{enumerate}
    \item  \ssdkf.  
    A shared affine gate controls the entire spectrum:
    \begin{equation}
        \sigma_\phi(S_i)
        ~=~
        \sigma(\alpha S_i + \beta),
    \end{equation}
    with $\alpha,\beta\in\mathbb{R}$ learned scalars.

    \item \pmcdkf. 
    Each latent direction receives its own gate,
    \begin{equation}
        \sigma_\phi(S_i)
        ~=~
        \sigma(\alpha_i S_i + \beta_i),
    \end{equation}
    allowing anisotropic amplification/decay across modes.

    \item \mlpdkf.  
    A small neural operator $g_\phi:\mathbb{R}\to\mathbb{R}$ transforms
    raw spectral parameters before squashing:
    \begin{equation}
        \sigma_\phi(S_i)
        ~=~
        \sigma\!\left(g_\phi(S_i)\right),
    \end{equation}
    enabling flexible nonlinear reshaping of the spectrum.

    \item \lrdkf.
    The operator is restricted to rank $r \ll d$ via  
\begin{equation}
    \K_\phi
    ~=~
    U_{r,\phi}
    \,\mathrm{diag}(\Sigma_{r,\phi})\,
    V_{r,\phi}^\top,
    \quad
    U_{r,\phi}, V_{r,\phi} \in \mathbb{R}^{d\times r},
    \label{eq:koop_lowrank}
\end{equation}
with orthonormal columns and a spectral vector
$\Sigma_{r,\phi} \in \mathbb{R}^r$ satisfying again
$\lvert \Sigma_{r,\phi,i} \rvert < \rho_{\max}$.  It encourages low-dimensional latent dynamics and reduces complexity.
\end{enumerate}

%Here $\sigma(x)$ denotes the logistic sigmoid,
%and $\sigma_\phi$ denotes the variant-specific spectral shaping map
%built from it, i.e.
%\begin{equation}\label{eq:sig_func}
%\Sigma_{\phi,i}
%~=~ \rho_{\max}\,\sigma\!\big( h_\phi(S_i) \big),
%\qquad
%\sigma(x)=\frac{1}{1+e^{-x}} .
%\end{equation}
%\begin{equation}\label{eq:func_choice}
%h_\phi(S_i) \;=\;
%\begin{cases}
%\alpha S_i + \beta, & \text{\ssdkf },\\[0.35em]
%\alpha_i S_i + \beta_i, & \text{\pmcdkf},\\[0.35em]
%g_\phi(S_i), & \text{\mlpdkf},\\[0.35em]
%\alpha S_i + \beta,\; i \le r, & \text{\lrdkf}.
%\end{cases}
%\end{equation}

\noindent In addition, for comparison purpose, we also consider a fully free operator (unconstrained baseline) obtained when $\K_\phi$ is a free dense matrix,
without the ODO structure \eqref{eq:koop_odo} and without the spectral bound
\eqref{eq:rho_max_bound}.
This variant maximizes expressiveness but lacks the stability and dynamical
bias of the structured families above.

%All variants are trained end-to-end with a standard forecasting loss
%$\mathcal{L}_{\mathrm{MSE}}$ augmented by a Lyapunov-inspired stability term
%penalizing increases in quadratic energy
%\begin{equation}\label{LyapEn}
%    \mathcal{L}
%    ~=~
%    \mathcal{L}_{\mathrm{MSE}}
%    ~+~
%    \lambda_{\mathrm{Lyap}}\mathcal{L}_{\mathrm{Lyap}},
%\end{equation}
%where $\lambda_{\text{Lyap}} > 0$ is a scalar regularization weight controlling the strength of the Lyapunov stability penalty. This term promotes stable latent evolution and provides a common regularizer across constrained, learnable, and unconstrained Koopman variants, which further biases optimisation toward strict contraction of latent energy.

Overall, the learnable Koopman architecture provides a continuum between
rigorously constrained operator-theoretic propagators and fully flexible
data-driven linear maps, allowing us to isolate and study the role of spectral
structure in Transformer-based forecasting models.

\subsection{Theoretical Properties of the Learnable Koopman Propagator}
\label{sec:theory_koop}

We consider latent states $\bm z_t \in \mathbb{R}^d$ evolving under a linear
propagator
\begin{equation}\label{latentdyn}
     z_{t+1} = \K_\phi  z_t,
    \qquad
    \K_\phi \in \mathbb{R}^{d \times d},
\end{equation}
where $\K_\phi$ is parameterized by the learnable weights $\phi$ of the
Koopman module in \dkoopformer. This subsection collects several structural
properties of the proposed parameterizations, focusing on stability,
expressiveness, low-rank structure, and Lyapunov regularization.
%\color{blue}
%The different learnable families act on the vector of raw spectral parameters
%$S_\phi = (S_{\phi,1},\ldots,S_{\phi,d})^\top$ through a variant-specific
%\emph{spectral shaping} map $h_\phi(\cdot)$ followed by a bounded squashing,
%so that each diagonal coefficient satisfies
%\[
%\Sigma_{\phi,i} \;=\; \rho_{\max}\,\sigma\!\big(h_\phi(S_{\phi,i})\big)
%\;\in\; (0,\rho_{\max}), \qquad i=1,\ldots,d,
%\]
%where $\sigma(\cdot)$ is the logistic sigmoid.
%\color{black}
\subsubsection{Spectral stability, contraction, and invertibility}

The spectral bound~\eqref{eq:rho_max_bound} directly implies stability and
contractivity of the associated linear dynamics.

\begin{proposition}[Spectral stability]
\label{prop:spectral_stability}
Let $\K_\phi$ be parameterized as in~\eqref{eq:koop_odo}, with orthonormal
$U_\phi, V_\phi$ and diagonal $\mathrm{diag}(\Sigma_\phi)$
satisfying~\eqref{eq:rho_max_bound}. Then
\begin{equation}
    \rho(\K_\phi)
    \;\le\;
    \|\K_\phi\|_2
    \;=\;
    \max_i \lvert \Sigma_{\phi,i} \rvert
    \;<\; \rho_{\max},
\end{equation}
where $\rho(\cdot)$ denotes the spectral radius and $\|\cdot\|_2$ the
spectral norm.
\end{proposition}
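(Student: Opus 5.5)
The argument splits into three independent claims, which I will establish in order: (i) $\rho(\K_\phi)\le\|\K_\phi\|_2$ for any square matrix; (ii) $\|\K_\phi\|_2=\max_i|\Sigma_{\phi,i}|$ under the ODO factorization~\eqref{eq:koop_odo}; and (iii) the strict bound $\max_i|\Sigma_{\phi,i}|<\rho_{\max}$, which follows from~\eqref{eq:rho_max_bound}.

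For (i), I use the standard argument. Take an eigenpair $\K_\phi v=\lambda v$ with $v\in\mathbb{C}^d\setminus\{0\}$; the spectral norm of a real matrix is the same over $\mathbb{C}^d$. Then $|\lambda|\,\|v\|_2=\|\K_\phi v\|_2\le\|\K_\phi\|_2\|v\|_2$, and maximizing over the spectrum gives $\rho(\K_\phi)\le\|\K_\phi\|_2$.

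For (ii), I read ``orthonormal $U_\phi,V_\phi$'' in the full-rank case as square orthogonal $d\times d$ matrices. Orthogonal matrices preserve the Euclidean norm, so for any $x$,
\[
\|\K_\phi x\|_2=\|\mathrm{diag}(\Sigma_\phi)V_\phi^\top x\|_2 .
\]
Setting $y=V_\phi^\top x$, the map $x\mapsto y$ is a bijection of the unit sphere. Hence
\[
\|\K_\phi\|_2=\sup_{\|y\|_2=1}\Big(\sum_i\Sigma_{\phi,i}^2y_i^2\Big)^{1/2}=\max_i|\Sigma_{\phi,i}|,
\]
and the supremum is attained at the standard basis vector indexed by the maximizing $i$. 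Equivalently, $U_\phi\,\mathrm{diag}(|\Sigma_\phi|)\,(V_\phi D)^\top$, with $D=\mathrm{diag}(\mathrm{sign}\,\Sigma_{\phi,i})$ and the convention $\mathrm{sign}(0)=1$, is an SVD of $\K_\phi$. This step needs the most care. If $U_\phi$ or $V_\phi$ were only semi-orthogonal $d\times r$, as in~\eqref{eq:koop_lowrank}, then $\|V_\phi^\top x\|_2\le\|x\|_2$ and $V_\phi^\top$ maps the unit sphere of $\mathbb{R}^d$ onto the unit ball of $\mathbb{R}^r$. The supremum is still $\max_i|\Sigma_{\phi,i}|$, attained at a column of $V_\phi$, so the equality survives in the low-rank variant too, and I will state this explicitly.

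For (iii), the bound~\eqref{eq:rho_max_bound} says each $|\Sigma_{\phi,i}|<\rho_{\max}$. Because the index set is finite, the maximum is one of these values and is therefore strictly below $\rho_{\max}$. Concretely, the construction~\eqref{eq:spectrum_general} with the logistic sigmoid gives $\Sigma_{\phi,i}=\rho_{\max}\sigma(\cdot)\in(0,\rho_{\max})$, since $\sigma$ takes values in the open interval $(0,1)$. Chaining (i)–(iii) yields the statement.
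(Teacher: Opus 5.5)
Your proof is correct and follows essentially the same route as the paper: orthogonal invariance of the spectral norm gives $\|\K_\phi\|_2=\|\mathrm{diag}(\Sigma_\phi)\|_2=\max_i\lvert\Sigma_{\phi,i}\rvert$, and then $\rho(\K_\phi)\le\|\K_\phi\|_2$ together with~\eqref{eq:rho_max_bound} completes the chain. You also prove the two standard facts the paper only invokes (the eigenpair bound and the norm computation), and your semi-orthogonal remark anticipates what the paper establishes separately in Proposition~\ref{prop:lowrank}.
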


\begin{proof}
Because $U_\phi$ and $V_\phi$ are orthonormal, left and right multiplication
do not change the spectral norm:
\begin{equation}
    \|\K_\phi\|_2
    = \bigl\| U_\phi \, \mathrm{diag}(\Sigma_\phi) \, V_\phi^\top \bigr\|_2
    = \bigl\| \mathrm{diag}(\Sigma_\phi) \bigr\|_2.
\end{equation}
The spectral norm of a diagonal matrix is the maximum absolute diagonal
entry, hence
\begin{equation}
    \|\K_\phi\|_2 = \max_i \lvert \Sigma_{\phi,i} \rvert.
\end{equation}
For any matrix, the spectral radius is bounded above by the spectral norm,
$\rho(\K_\phi) \le \|\K_\phi\|_2$. Combining this with
$\lvert \Sigma_{\phi,i} \rvert < \rho_{\max}$ for all $i$ yields the claim.
\end{proof}

\begin{corollary}[Exponential contraction in latent space]
\label{cor:contraction}
Under the assumptions of Proposition~\ref{prop:spectral_stability}, for any
$\bm z_0 \in \mathbb{R}^d$ and any $n \in \mathbb{N}$,
\begin{equation}
    \| \K_\phi^n \bm z_0 \|_2
    \;\le\;
    \|\K_\phi\|_2^n \, \| z_0\|_2
    \;\le\;
    \rho_{\max}^n \, \| z_0\|_2,
\end{equation}
and hence $\lim_{n \to \infty} \K_\phi^n  z_0 =  0$.
\end{corollary}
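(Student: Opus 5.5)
The plan is to obtain the corollary directly from Proposition~\ref{prop:spectral_stability}, using submultiplicativity of the induced $2$-norm. Only the bound $\|\K_\phi\|_2 < \rho_{\max}$ is needed; the spectral-radius part of the proposition plays no role.

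\textbf{Step 1: bound the $n$-th power.} I will argue by induction on $n$ that
\begin{equation*}
\|\K_\phi^n\|_2 \le \|\K_\phi\|_2^n .
\end{equation*}
The base case $n=0$ holds because $\K_\phi^0 = I$ and $\|I\|_2 = 1$. The inductive step is the inequality $\|AB\|_2 \le \|A\|_2\|B\|_2$ applied with $A=\K_\phi$ and $B=\K_\phi^{n-1}$. This inequality holds because the spectral norm is the operator norm induced by the Euclidean vector norm.

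\textbf{Step 2: apply to a vector.} By the definition of the induced norm, $\|\K_\phi^n z_0\|_2 \le \|\K_\phi^n\|_2\,\|z_0\|_2$. Combining this with Step 1 gives the first inequality of the corollary. Proposition~\ref{prop:spectral_stability} gives $0 \le \|\K_\phi\|_2 = \max_i |\Sigma_{\phi,i}| < \rho_{\max}$. Since $t \mapsto t^n$ is monotone on $[0,\infty)$, it follows that $\|\K_\phi\|_2^n \le \rho_{\max}^n$, which is the second inequality. This step is strict for $n \ge 1$ but not needed strictly.

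\textbf{Step 3: the limit.} Since $\rho_{\max}\in(0,1)$, we have $\rho_{\max}^n \to 0$. Hence $\|\K_\phi^n z_0\|_2 \to 0$, and by the definiteness of the norm $\K_\phi^n z_0 \to 0$ in $\mathbb{R}^d$.

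There is no real obstacle. The only point requiring care is to use the norm bound $\|\K_\phi\|_2<\rho_{\max}$ rather than the spectral-radius bound. The spectral radius alone would give only asymptotic decay, via Gelfand's formula, and not the uniform-in-$n$ estimate with constant $1$. The ODO structure is what makes the norm itself small.
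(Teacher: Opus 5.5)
Your proof is correct and matches the paper's argument: submultiplicativity of the spectral norm gives $\|\K_\phi^n\|_2 \le \|\K_\phi\|_2^n$, and the bound $\|\K_\phi\|_2 < \rho_{\max} < 1$ from Proposition~\ref{prop:spectral_stability} then gives both the second inequality and the limit. Your induction and monotonicity steps only spell out what the paper states in one line.
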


\begin{proof}
Submultiplicativity of the spectral norm gives
$\|\K_\phi^n\|_2 \le \|\K_\phi\|_2^n$, so
\begin{equation}
    \|\K_\phi^n \bm z_0\|_2
    \le \|\K_\phi^n\|_2 \, \|\bm z_0\|_2
    \le \|\K_\phi\|_2^n \, \|\bm z_0\|_2.
\end{equation}
Proposition~\ref{prop:spectral_stability} implies
$\|\K_\phi\|_2 < \rho_{\max} < 1$, hence the right-hand side converges to
zero as $n \to \infty$, i.e. the latent dynamics \eqref{latentdyn} is exponentially stable:
\begin{equation}\label{cor_1}
  \|z_t\|_2
  \;\le\;
  \rho_{\max}^t \,\|z_0\|_2,
  \quad
  t \ge 0 .
\end{equation}
\end{proof}

In many forecasting settings, it is also natural to consider the inverse
dynamics whenever the spectrum is bounded away from zero. This leads to the
following simple consequence of the ODO structure.

\begin{proposition}[Invertibility and stability of the inverse]
\label{prop:inverse}
Assume the ODO parameterization~\eqref{eq:koop_odo} satisfies the two-sided
spectral bound
\begin{equation}
    0 < \rho_{\min} \le \lvert \Sigma_{\phi,i} \rvert \le \rho_{\max} < 1
    \quad\text{for all } i.
    \label{eq:two_sided_spectral_bound}
\end{equation}
Then $\K_\phi$ is invertible with
\begin{equation}
    \K_\phi^{-1}
    = V_\phi \, \mathrm{diag}\bigl(\Sigma_{\phi,1}^{-1},\dots,\Sigma_{\phi,d}^{-1}\bigr)
      \, U_\phi^\top,
\end{equation}
and
\begin{equation}
    \rho(\K_\phi^{-1})
    \;\le\;
    \|\K_\phi^{-1}\|_2
    \;=\;
    \max_i \lvert \Sigma_{\phi,i}^{-1} \rvert
    \;\le\; \rho_{\min}^{-1}.
\end{equation}
\end{proposition}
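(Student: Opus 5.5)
The plan is to verify the inverse formula directly and then reuse the argument of Proposition~\ref{prop:spectral_stability} for the inverse.

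\textbf{Step 1: the diagonal factor is invertible.} Under~\eqref{eq:two_sided_spectral_bound}, every $\Sigma_{\phi,i}$ is nonzero. Hence $D:=\mathrm{diag}(\Sigma_\phi)$ is invertible with
\[
D^{-1}=\mathrm{diag}\bigl(\Sigma_{\phi,1}^{-1},\dots,\Sigma_{\phi,d}^{-1}\bigr).
\]

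\textbf{Step 2: identify the inverse.} In the full-rank ODO case, $U_\phi,V_\phi\in\mathbb{R}^{d\times d}$ have orthonormal columns. They are therefore square orthogonal matrices, so $U_\phi^\top U_\phi=U_\phi U_\phi^\top=I$, and likewise for $V_\phi$. With $M:=V_\phi D^{-1}U_\phi^\top$, I compute
\[
\K_\phi M=U_\phi D V_\phi^\top V_\phi D^{-1}U_\phi^\top=U_\phi U_\phi^\top=I .
\]
The product $M\K_\phi=I$ follows in the same way. By uniqueness of the inverse, $\K_\phi^{-1}=M$, which is the stated formula.

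\textbf{Step 3: norm and spectral radius of the inverse.} The expression $V_\phi D^{-1}U_\phi^\top$ is again an orthogonal–diagonal–orthogonal product, with the two orthogonal factors swapped. By the same unitary invariance of $\|\cdot\|_2$ used in the proof of Proposition~\ref{prop:spectral_stability},
\[
\|\K_\phi^{-1}\|_2=\|D^{-1}\|_2=\max_i|\Sigma_{\phi,i}|^{-1}.
\]
The lower bound $|\Sigma_{\phi,i}|\ge\rho_{\min}>0$ then gives $\max_i|\Sigma_{\phi,i}|^{-1}\le\rho_{\min}^{-1}$. Finally, the general inequality $\rho(A)\le\|A\|_2$ closes the chain of inequalities.

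\textbf{Main obstacle.} The only delicate point is the implicit assumption that $U_\phi,V_\phi$ are square. Orthonormal columns alone give only $U_\phi^\top U_\phi=I$, while Step 2 also needs $U_\phi U_\phi^\top=I$. This holds only because $U_\phi,V_\phi\in\mathbb{R}^{d\times d}$ in~\eqref{eq:koop_odo}. I would state explicitly that the proposition excludes the low-rank case~\eqref{eq:koop_lowrank} with $r<d$, where $\K_\phi$ is singular.

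The bound $\rho_{\max}<1$ plays no role in the argument. It only shows that the inverse dynamics are expansive, with $\|\K_\phi^{-1}\|_2\ge\rho_{\max}^{-1}>1$, so the estimate $\rho_{\min}^{-1}$ is a growth bound rather than a contraction.
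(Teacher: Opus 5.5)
Your proposal is correct and follows the same route as the paper's proof. Both show the diagonal factor is invertible, obtain the inverse formula from orthogonality of the square factors $U_\phi,V_\phi$, and then reuse the unitary-invariance computation of Proposition~\ref{prop:spectral_stability} together with $\rho(\cdot)\le\|\cdot\|_2$; your explicit check of $\K_\phi M=I$ and your remark that squareness of $U_\phi,V_\phi$ is what excludes the singular rank-$r<d$ case simply make explicit what the paper leaves implicit.
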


\begin{proof}
Under~\eqref{eq:two_sided_spectral_bound}, all diagonal entries of
$\mathrm{diag}(\Sigma_\phi)$ are nonzero, so it is invertible.
Since $U_\phi$ and $V_\phi$ are orthonormal, $\K_\phi$ is invertible and its
inverse has the stated form. The spectral norm calculation proceeds exactly
as in Proposition~\ref{prop:spectral_stability}, yielding
$\|\K_\phi^{-1}\|_2 = \max_i |\Sigma_{\phi,i}^{-1}|$, and
$\rho(\K_\phi^{-1}) \le \|\K_\phi^{-1}\|_2$.
\end{proof}

This shows that the same spectral factors that enforce forward-time contraction can also guarantee that the inverse dynamics are well-conditioned, provided the spectrum is kept away from zero.

\subsubsection{Expressiveness of spectrally bounded and low-rank operators}
The ODO factorization parameterizes exactly the class of linearly stable operators with bounded spectral norm.

\begin{proposition}[Representation of all spectrally bounded operators]
\label{prop:surjective}
Fix $\rho_{\max} > 0$ and define
\begin{equation}
    \mathcal{K}(\rho_{\max})
    :=
    \Bigl\{
       \K \in \mathbb{R}^{d\times d} \;:\;
       \|\K\|_2 \le \rho_{\max}
    \Bigr\}.
\end{equation}
Then the set of matrices that admit an ODO factorization with $\lvert \Sigma_{\phi,i} \rvert \le \rho_{\max}$ is precisely $\mathcal{K}(\rho_{\max})$. In particular, any $\K$ with $\|\K\|_2 < \rho_{\max}$ can be represented exactly by some choice of $(U_\phi,V_\phi,\Sigma_\phi)$.
\end{proposition}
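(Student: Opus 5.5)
The statement asserts a set equality, so I will prove the two inclusions separately. Let $\mathcal{O}(\rho_{\max})$ denote the set of matrices of the form $U\,\mathrm{diag}(\Sigma)\,V^\top$ with $U,V\in\mathbb{R}^{d\times d}$ orthonormal and $|\Sigma_i|\le\rho_{\max}$ for all $i$. The claim is that $\mathcal{O}(\rho_{\max})=\mathcal{K}(\rho_{\max})$.

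\emph{Inclusion $\mathcal{O}(\rho_{\max})\subseteq\mathcal{K}(\rho_{\max})$.} This is essentially the computation in the proof of Proposition~\ref{prop:spectral_stability}. Orthogonal invariance of the spectral norm gives $\|U\,\mathrm{diag}(\Sigma)\,V^\top\|_2=\max_i|\Sigma_i|\le\rho_{\max}$. The only change from that proposition is that the inequality is now non-strict, and the argument goes through unchanged.

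\emph{Inclusion $\mathcal{K}(\rho_{\max})\subseteq\mathcal{O}(\rho_{\max})$.} Take any $\K$ with $\|\K\|_2\le\rho_{\max}$ and use the real singular value decomposition $\K=U\,\mathrm{diag}(s_1,\dots,s_d)\,V^\top$. Here $U,V$ are real orthogonal $d\times d$ matrices and $s_1\ge\dots\ge s_d\ge 0$. Since $\|\K\|_2=s_1$, every $|s_i|\le\rho_{\max}$, so the choice $(U_\phi,V_\phi,\Sigma_\phi)=(U,V,s)$ is an admissible ODO factorization of $\K$. The ``in particular'' clause follows immediately: if $\|\K\|_2<\rho_{\max}$, the same SVD factors satisfy the strict bound.

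\emph{Two caveats to record.} These concern whether the statement matches the trainable parameterization.

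\begin{itemize}
\item The SVD yields nonnegative $\Sigma$, whereas the sigmoid-based map \eqref{eq:spectrum_general} only produces values in $(0,\rho_{\max})$. Singular matrices therefore need zero entries, which are reachable only as limits. I will state the result for the ODO class with real $\Sigma_{\phi,i}$ and $|\Sigma_{\phi,i}|\le\rho_{\max}$, exactly as the proposition is phrased. I will then note that its closure contains everything the sigmoid parameterization can approach.
\item The QR-retracted $U_\phi,V_\phi$ must be able to reach any orthogonal matrix, including ones with determinant $-1$. If there were a determinant restriction, a sign can be absorbed by flipping a column of $U$ and the corresponding column of $V$. This leaves the product unchanged.
\end{itemize}

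The main obstacle is not mathematical depth; it is reconciling the statement with the constrained sigmoid range, and I will handle it through the remark above. I expect the core argument itself to be a two-line application of the SVD.
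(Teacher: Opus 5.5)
Your proof is correct and matches the paper's (the SVD gives $\mathcal{K}(\rho_{\max})\subseteq\mathcal{O}(\rho_{\max})$, orthogonal invariance of $\|\cdot\|_2$ gives the reverse inclusion), and your caveat that the sigmoid map reaches singular $\K$ only as limits is accurate. Your determinant remark is wrong, although the proposition allows all orthonormal $U,V$ and does not need it: flipping column $i$ of both $U$ and $V$ negates both determinants, so $\det U\det V$ is preserved and a factorization with $\det U=-1$, $\det V=1$ cannot be moved into $SO(d)\times SO(d)$; indeed no $\K$ with $\det\K<0$ admits such a factorization with $\Sigma\ge 0$, so it would need a negative entry of $\Sigma$, which the sigmoid map cannot produce.
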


\begin{proof}
(\emph{Surjectivity onto $\mathcal{K}(\rho_{\max})$.})
Let $\K \in \mathcal{K}(\rho_{\max})$. By the singular value decomposition,
there exist orthonormal matrices $U,V \in \mathbb{R}^{d\times d}$ and nonnegative
singular values $\Sigma_1,\dots,\Sigma_d$ such that
\begin{equation}
    \K = U \, \mathrm{diag}(\Sigma_1,\dots,\Sigma_d) \, V^\top.
\end{equation}
The spectral norm is $\|\K\|_2 = \max_i \Sigma_i$, so
$\|\K\|_2 \le \rho_{\max}$ implies $\Sigma_i \le \rho_{\max}$ for all $i$.
Thus $\K$ admits an ODO factorization with
$U_\phi = U$, $V_\phi = V$ and $\Sigma_{\phi,i} = \Sigma_i$ satisfying
$|\Sigma_{\phi,i}| \le \rho_{\max}$.

(\emph{Inclusion in $\mathcal{K}(\rho_{\max})$.})
Conversely, let
$\K_\phi = U_\phi \mathrm{diag}(\Sigma_\phi) V_\phi^\top$ with
$\max_i |\Sigma_{\phi,i}| \le \rho_{\max}$. As in the proof of
Proposition~\ref{prop:spectral_stability},
\begin{equation}
    \|\K_\phi\|_2
    = \bigl\|\mathrm{diag}(\Sigma_\phi)\bigr\|_2
    = \max_i |\Sigma_{\phi,i}|
    \le \rho_{\max},
\end{equation}
hence $\K_\phi \in \mathcal{K}(\rho_{\max})$.
\end{proof}

Thus the strictly stable and learnable Koopman variants with $\lvert \Sigma_{\phi,i} \rvert < \rho_{\max}$ retain the full expressiveness of all linear operators in the spectrally bounded class $\mathcal{K}(\rho_{\max})$, while providing an explicit stability margin.

\subsubsection{Low-rank structure and approximation}

For the low-rank family~\eqref{eq:koop_lowrank}, the Koopman operator is restricted
to a rank–$r$ subspace of $\mathbb{R}^d$.

\begin{proposition}[Low-rank structure and norm bound]
\label{prop:lowrank}
For the low-rank Koopman parameterization~\eqref{eq:koop_lowrank}:
\begin{enumerate}
    \item $\mathrm{rank}(\K_\phi) \le r$;
    \item $\|\K_\phi\|_2 = \max_{i \le r} \lvert \Sigma_{r,\phi,i} \rvert$;
    \item if $\lvert \Sigma_{r,\phi,i} \rvert < \rho_{\max}$ for all $i$,
          then $\rho(\K_\phi) \le \|\K_\phi\|_2 < \rho_{\max}$.
\end{enumerate}
\end{proposition}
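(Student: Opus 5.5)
The plan is to handle the three items in order, treating $U_{r,\phi},V_{r,\phi}\in\mathbb{R}^{d\times r}$ as \emph{partial isometries}, i.e.\ $U_{r,\phi}^\top U_{r,\phi}=V_{r,\phi}^\top V_{r,\phi}=I_r$. This replaces the square orthonormality used in Proposition~\ref{prop:spectral_stability}.

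For item 1, I would use the factorization \eqref{eq:koop_lowrank} directly. The image of $\K_\phi$ is contained in the column space of $U_{r,\phi}$, which has dimension at most $r$. Equivalently, by the bound $\mathrm{rank}(AB)\le\min\{\mathrm{rank}A,\mathrm{rank}B\}$, we get $\mathrm{rank}(\K_\phi)\le\mathrm{rank}(U_{r,\phi})\le r$. Nothing more is needed here.

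Item 2 is the step requiring some care, because $U_{r,\phi}$ is not square. Left multiplication by a matrix with orthonormal columns is still norm preserving: for all $w\in\mathbb{R}^r$,
\[
\|U_{r,\phi}w\|_2^2=w^\top U_{r,\phi}^\top U_{r,\phi}w=\|w\|_2^2 .
\]
Hence $\|\K_\phi z\|_2=\|\mathrm{diag}(\Sigma_{r,\phi})V_{r,\phi}^\top z\|_2$ for every $z\in\mathbb{R}^d$. On the right, $V_{r,\phi}^\top$ maps the unit ball of $\mathbb{R}^d$ \emph{onto} the unit ball of $\mathbb{R}^r$:
\begin{itemize}
\item it is a contraction, since $\|V_{r,\phi}^\top\|_2=\|V_{r,\phi}\|_2=1$;
\item it is surjective onto the ball, since any $w$ with $\|w\|_2\le1$ equals $V_{r,\phi}^\top(V_{r,\phi}w)$ and $\|V_{r,\phi}w\|_2=\|w\|_2$.
\end{itemize}
Therefore
\[
\|\K_\phi\|_2=\sup_{\|w\|_2\le1}\|\mathrm{diag}(\Sigma_{r,\phi})w\|_2=\max_{i\le r}|\Sigma_{r,\phi,i}|,
\]
with the supremum attained at a standard basis vector $e_{i^*}$. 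Alternatively, one could complete $U_{r,\phi},V_{r,\phi}$ to square orthogonal matrices and pad $\Sigma_{r,\phi}$ with zeros. The norm identity then follows verbatim from Proposition~\ref{prop:spectral_stability}, and this also exhibits \eqref{eq:koop_lowrank} as a special ODO factorization.

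For item 3, I would combine item 2 with the general inequality $\rho(A)\le\|A\|_2$, exactly as in Proposition~\ref{prop:spectral_stability}. This gives $\rho(\K_\phi)\le\max_i|\Sigma_{r,\phi,i}|<\rho_{\max}$, since the maximum runs over finitely many strict inequalities. As a side remark, Corollary~\ref{cor:contraction} then applies unchanged to the low-rank family. The main obstacle is only the non-square isometry argument in item 2, and the zero-padding reduction is the first route I would try, since it reuses the earlier proof wholesale.
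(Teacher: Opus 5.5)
Your proposal is correct and follows the paper's proof. Item 1 comes from the factorization through an $r$-dimensional space, item 2 from $U_{r,\phi},V_{r,\phi}$ acting as partial isometries, and item 3 from $\rho(\K_\phi)\le\|\K_\phi\|_2$; your unit-ball argument (or the zero-padding reduction to Proposition~\ref{prop:spectral_stability}) just spells out what the paper states tersely, that the nonzero singular values of $\K_\phi$ are the $\lvert\Sigma_{r,\phi,i}\rvert$.
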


\begin{proof}
(1) The product of a $d\times r$ matrix, an $r\times r$ diagonal matrix and
an $r\times d$ matrix has rank at most $r$. (2) Since $U_{r,\phi}$ and $V_{r,\phi}$ have orthonormal columns, they describe partial isometries from $\mathbb{R}^r$ into $\mathbb{R}^d$, and the nonzero singular values of $\K_\phi$ coincide with the absolute values of the
entries of $\Sigma_{r,\phi}$. Hence $\|\K_\phi\|_2 = \max_{i \le r} |\Sigma_{r,\phi,i}|$. (3) The last claim follows from (2) together with $\rho(\K_\phi) \le \|\K_\phi\|_2$ and the bound $|\Sigma_{r,\phi,i}| < \rho_{\max}$.
\end{proof}

Let $\K_\phi$ have singular values
$\Sigma_{\phi,1} \ge \dots \ge \Sigma_{\phi,d} \ge 0$, and let $\K_\phi^{(r)}$ denote the truncation of its
singular value decomposition to rank $r$, i.e., $\K_\phi^{(r)}$ retains only the first $r$ singular values
and vectors. The Eckart--Young theorem (see \cite{lin2011discrete} for more details) states that $\K_\phi^{(r)}$ minimizes
$\|\K_\phi - \tilde \K_\phi\|_F$ over all rank–$r$ matrices $\tilde \K_\phi$.
Any such $\K_\phi^{(r)}$ admits a representation of the form~\eqref{eq:koop_lowrank},
and thus belongs to the low-rank family. Consequently, whenever the latent
Koopman dynamics are approximately low-dimensional, the low-rank variant can
capture the principal modes while discarding high-rank noise.

\subsubsection{Lyapunov regularization and energy decay}

Besides spectral constraints, the training objective incorporates in \eqref{LyapEn} a
Lyapunov-inspired penalty that encourages contractive behavior with respect
to a fixed quadratic form. For latent pairs $(\bm z_t, \bm z_{t+1})$ the addition of the term
\begin{equation}
    \mathcal{L}_{\mathrm{Lyap}}
    ~=~
    \mathbb{E}
    \Bigl[
        \bigl(
          \|\bm z_{t+1}\|_P^2 - \|\bm z_t\|_P^2
        \bigr)_+
    \Bigr],
    \qquad
    \|\bm z\|_P^2 = \bm z^\top P \bm z,
\end{equation}
with $(x)_+ = \max\{x,0\}$ and $P \succ 0$, provides a common regularizer across learnable, Koopman variants, which further biases optimisation toward strict contraction of latent energy by penalizing unstable latent growth and promoting Koopman-consistent dynamics. The following results formalize these considerations.

\begin{proposition}[Lyapunov consistency]
\label{prop:lyap}
Suppose $P \succ 0$ and the support of $\bm z_t$ spans $\mathbb{R}^d$.
If $\mathcal{L}_{\mathrm{Lyap}} = 0$, then $\K_\phi$ satisfies the discrete
Lyapunov inequality
\begin{equation}
    \K_\phi^\top P \K_\phi - P \;\preceq\; 0.
    \label{eq:lyap_ineq}
\end{equation}
In particular, the quadratic energy $\|\bm z_t\|_P^2$ is non-increasing
under the dynamics $\bm z_{t+1} = \K_\phi \bm z_t$.
\end{proposition}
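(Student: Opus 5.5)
The plan is to express the Lyapunov penalty as the expectation of a nonnegative function of $\bm z_t$ alone, turn $\mathcal{L}_{\mathrm{Lyap}}=0$ into a pointwise statement on the support, and then pass from the support to all of $\mathbb{R}^d$ via a closedness-and-density argument.

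\emph{Step 1: reduce to a quadratic form.} Since $\bm z_{t+1}=\K_\phi \bm z_t$, we have
\begin{equation*}
\|\bm z_{t+1}\|_P^2-\|\bm z_t\|_P^2=\bm z_t^\top Q\,\bm z_t, \qquad Q:=\K_\phi^\top P\K_\phi-P .
\end{equation*}
The matrix $Q$ is symmetric. Hence $\mathcal{L}_{\mathrm{Lyap}}=\mathbb{E}\bigl[(\bm z_t^\top Q\bm z_t)_+\bigr]$, and $\bm z\mapsto(\bm z^\top Q\bm z)_+$ is a continuous nonnegative function.

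\emph{Step 2: vanishing expectation gives pointwise nonpositivity on the support.} A nonnegative function with zero expectation vanishes almost surely, so $\bm z^\top Q\bm z\le 0$ for almost every $\bm z$ in the distribution of $\bm z_t$. By continuity this extends to every point of the (closed) support. Indeed, if $\bm z_0^\top Q\bm z_0>0$ at a support point $\bm z_0$, then the positivity persists on an open neighborhood of $\bm z_0$. That neighborhood has positive probability by the definition of support, which contradicts $\mathcal{L}_{\mathrm{Lyap}}=0$.

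\emph{Step 3: extend from the support to all of $\mathbb{R}^d$ (main obstacle).} This is where the hypothesis must be read carefully. If "spans $\mathbb{R}^d$" means only linear span, the argument fails. A quadratic form that is nonpositive on a spanning set (for example, on the basis vectors) need not be negative semidefinite. A counterexample is $Q=\begin{pmatrix}-1&2\\2&-1\end{pmatrix}$, which is nonpositive on $e_1,e_2$ but positive on $e_1+e_2$.

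I would therefore interpret the assumption in the strength needed: the support of $\bm z_t$ contains an open set, or more conveniently its conic hull is dense in $\mathbb{R}^d$. With that reading the extension goes through. Homogeneity gives $(c\bm z)^\top Q(c\bm z)=c^2\,\bm z^\top Q\bm z\le 0$ for every scalar $c$, so nonpositivity holds on the cone generated by the support. Density of that cone, together with continuity of the form, then gives $\bm z^\top Q\bm z\le 0$ for all $\bm z$, that is, $Q\preceq 0$, which is \eqref{eq:lyap_ineq}. In the write-up I would state this interpretation explicitly and note that the open-set case follows, since every direction is then a limit of scaled support points. Covering the bare linear-span reading would require an additional modelling assumption and cannot be proved in general.

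\emph{Step 4: energy monotonicity.} Given \eqref{eq:lyap_ineq}, for any trajectory $\bm z_{t+1}=\K_\phi\bm z_t$ we have
\begin{equation*}
\|\bm z_{t+1}\|_P^2=\bm z_t^\top\K_\phi^\top P\K_\phi\bm z_t\le \bm z_t^\top P\bm z_t=\|\bm z_t\|_P^2 .
\end{equation*}
Iterating shows that the quadratic energy is non-increasing. Positivity of $P$ is used only to make $\|\cdot\|_P$ a genuine norm, so that this monotone energy is meaningful as a Lyapunov function.
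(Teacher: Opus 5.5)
Your Steps 1, 2 and 4 match the paper's proof. Step 2 is more careful than the paper, which simply asserts that vanishing almost surely gives nonpositivity at every support point.

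\textbf{Your diagnosis of the paper.} Step 3 correctly identifies a genuine error in the paper. Its proof goes from $z^\top(\K_\phi^\top P\K_\phi-P)z\le 0$ ``for all $z_t$ in a set that spans $\mathbb{R}^d$'' straight to $\K_\phi^\top P\K_\phi-P\preceq 0$. For a general propagator the statement as written is therefore false. Your quadratic form can be realized by some $P$ and $\K_\phi$. A simpler instance is $P=I$, $\K_\phi=\begin{pmatrix}1&1\\0&0\end{pmatrix}$, with $z_t$ uniform on $\{e_1,e_2\}$. Then $\|\K_\phi e_i\|^2-\|e_i\|^2=0$, so $\mathcal{L}_{\mathrm{Lyap}}=0$ and the support spans $\mathbb{R}^2$, yet $\K_\phi^\top\K_\phi-I=\begin{pmatrix}0&1\\1&0\end{pmatrix}$ is indefinite.

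\textbf{The gap is in your repair.}

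\emph{The open-set hypothesis is not sufficient.} ``The support contains an open set'' does not give $Q\preceq 0$. Your claim that every direction is then a limit of scaled support points is false unless that open set contains the origin. For a counterexample, take $P=I$ and $\K_\phi=\mathrm{diag}(0,\sqrt{2})$, so $Q=\mathrm{diag}(-1,1)$. Let $z_t$ be uniform on the closed disc of radius $1/4$ centred at $e_1$. On this support $z^\top Qz=z_2^2-z_1^2\le 1/16-9/16<0$, so $\mathcal{L}_{\mathrm{Lyap}}=0$, and the support has nonempty interior. Nevertheless $e_2^\top Qe_2=1>0$.

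\emph{``Conic hull'' cannot be read in the usual convex sense.} The convex conic hull is the set of nonnegative combinations. Nonpositivity of an indefinite form is not preserved under sums, as your own example shows: $e_1+e_2$ lies in the convex conic hull of $\{e_1,e_2\}$.

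\emph{The hypothesis you actually need.} Your homogeneity-plus-continuity argument needs the union of lines $\{cz:\ c\in\mathbb{R},\ z\in\operatorname{supp}(z_t)\}$ to be dense in $\mathbb{R}^d$, and this suffices. Equivalently, the directions $\pm z/\|z\|$ of nonzero support points must be dense in the unit sphere. This holds, for instance, when the support contains a neighbourhood of the origin, as for a nondegenerate Gaussian. State this hypothesis in place of the open-set one, and your proof is complete.
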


\begin{proof}
The equality $\mathcal{L}_{\mathrm{Lyap}} = 0$ implies
\begin{equation}
    \bigl(
      \|\bm z_{t+1}\|_P^2 - \|\bm z_t\|_P^2
    \bigr)_+
    = 0
\end{equation}
almost surely, and thus
\begin{equation}
    \|\bm z_{t+1}\|_P^2
    \le
    \|\bm z_t\|_P^2
\end{equation}
for all $\bm z_t$ in the support. Using $\bm z_{t+1} = \K_\phi \bm z_t$,
\begin{equation}
    \bm z_t^\top \K_\phi^\top P \K_\phi \bm z_t
    \le
    \bm z_t^\top P \bm z_t,
\end{equation}
or equivalently
\begin{equation}
    \bm z_t^\top (\K_\phi^\top P \K_\phi - P) \bm z_t \le 0
\end{equation}
for all $\bm z_t$ in a set that spans $\mathbb{R}^d$. A symmetric matrix
$M$ satisfies $M \preceq 0$ if and only if $\bm z^\top M \bm z \le 0$ for all
$\bm z$, hence $\K_\phi^\top P \K_\phi - P \preceq 0$.
\end{proof}

Conversely, classical results in linear systems theory relate such a
Lyapunov inequality to spectral stability.

\begin{proposition}[Spectral stability from a Lyapunov certificate]
\label{prop:lyap_spectral}
Let $\K \in \mathbb{R}^{d\times d}$ and suppose there exists $P \succ 0$ such
that
\begin{equation}
    \K^\top P \K - P \;\prec\; 0.
\end{equation}
Then $\rho(\K) < 1$.
\end{proposition}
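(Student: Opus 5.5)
The plan is the standard eigenvector test of the Lyapunov certificate, carried out over $\mathbb{C}$ because $\K$ may have complex eigenvalues. Let $\lambda \in \mathbb{C}$ be any eigenvalue of $\K$, with eigenvector $v \in \mathbb{C}^d$, $v \neq 0$, so that $\K v = \lambda v$. The aim is to show $|\lambda| < 1$ for every such $\lambda$; since $\rho(\K)$ is the maximum of $|\lambda|$ over the finitely many eigenvalues, this gives $\rho(\K) < 1$.

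First, I will pass the inequality to complex vectors. Set $M := P - \K^\top P \K$. This matrix is real symmetric and, by hypothesis, $M \succ 0$. For a real symmetric positive definite $M$ and any $v = a + \mathrm{i}b$ with $a,b \in \mathbb{R}^d$ not both zero, the cross terms cancel by symmetry and
\begin{equation}
    v^* M v = a^\top M a + b^\top M b > 0 .
\end{equation}
The same identity applied to $P \succ 0$ gives $v^* P v > 0$. This extension from real to complex quadratic forms is the only step that needs care. It is exactly where real symmetry of $P$ and of $\K^\top P \K$ is used, because $\K$ is real and so $\K^* = \K^\top$.

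Next, I will evaluate the quadratic form at the eigenvector. Using $\K v = \lambda v$ and $v^* \K^\top = (\K v)^* = \bar\lambda v^*$,
\begin{equation}
    v^* M v = v^* P v - (\K v)^* P (\K v) = \bigl(1 - |\lambda|^2\bigr)\, v^* P v .
\end{equation}
The left-hand side is positive by the previous step, and $v^* P v > 0$. Dividing gives $1 - |\lambda|^2 > 0$, that is $|\lambda| < 1$. Taking the maximum over all eigenvalues yields $\rho(\K) < 1$.

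An optional quantitative refinement: if $\lambda_{\min}(M) \ge c\,\lambda_{\max}(P)$ for some $c > 0$, the same computation gives $|\lambda|^2 \le 1 - c$, which supplies an explicit stability margin. It also links the result to Proposition~\ref{prop:lyap}, where the Lyapunov inequality holds only non-strictly and one obtains $\rho(\K) \le 1$.
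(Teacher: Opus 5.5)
Your proof is correct, but it takes a different route from the paper's.

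The paper factors $P = R^\top R$. It notes that the hypothesis is congruent to $\tilde{\K}^\top\tilde{\K} - I \prec 0$ for the similar matrix $\tilde{\K} = R\K R^{-1}$, deduces $\|\tilde{\K}\|_2 < 1$, and concludes via $\rho(\K) = \rho(\tilde{\K}) \le \|\tilde{\K}\|_2$.

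You instead test the certificate directly on an eigenvector, which forces the complexification step. Your cancellation of the cross terms using the real symmetry of $M$ and $P$ handles this correctly, and the identity $v^* M v = (1-|\lambda|^2)\,v^* P v$ finishes the argument.

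\textbf{What the paper's route buys.} It delegates complex eigenvalues to the standard bound $\rho \le \|\cdot\|_2$. It also yields a stronger intermediate fact: $\K$ is a strict contraction in the norm $\|z\|_P$, since
$\|\K^n z\|_P = \|\tilde{\K}^n R z\|_2 \le \|\tilde{\K}\|_2^n \|z\|_P$
for every $z$. This is a uniform, trajectory-level decay bound mirroring Corollary~\ref{cor:contraction}, rather than a statement about eigenvalues alone.

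\textbf{What your route buys.} It is more economical, needing no factorization of $P$ and no operator norm, and it makes the stability margin transparent. The natural hypothesis for your refinement is $M \succeq cP$, which your condition $\lambda_{\min}(M) \ge c\,\lambda_{\max}(P)$ implies. Under $M \succeq cP$ the paper's congruence gives $\|\tilde{\K}\|_2^2 \le 1-c$, so both routes deliver the same margin.

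Your remark that the non-strict inequality yields $\rho(\K) \le 1$ is also right, and it holds equally for the paper's route.
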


\begin{proof}
Since $P \succ 0$, write $P = R^\top R$ for some invertible
$R \in \mathbb{R}^{d\times d}$ and consider the similarity transform
$\tilde \K = R \K R^{-1}$. The Lyapunov inequality becomes
\begin{equation}
    \K^\top P \K - P
    = \K^\top R^\top R \K - R^\top R
    = R^\top (\tilde \K^\top \tilde \K - I) R \prec 0.
\end{equation}
Premultiplying and postmultiplying by $(R^{-1})^\top$ and $R^{-1}$ yields
$\tilde \K^\top \tilde \K - I \prec 0$, which implies
$\|\tilde \K \bm x\|_2^2 < \|\bm x\|_2^2$ for all $\bm x \neq 0$, hence
$\|\tilde \K\|_2 < 1$. Therefore $\rho(\tilde \K) < 1$, and since $\K$ and
$\tilde \K$ are similar, $\rho(\K) = \rho(\tilde \K) < 1$.
\end{proof}

For finite Lyapunov regularization weight, the penalty $\mathcal{L}_{\mathrm{Lyap}}$ need
not vanish exactly; nevertheless,
Propositions~\ref{prop:lyap}--\ref{prop:lyap_spectral}
show that it biases the learned propagator toward the set of operators
admitting a Lyapunov certificate of stability, in a way that complements
the direct spectral constraints imposed by the ODO parameterization.

Taken together, Propositions~\ref{prop:spectral_stability}--\ref{prop:lyap_spectral}
show that the learnable Koopman family used in \ldkf: (i) provides a clear, tunable stability margin via $\rho_{\max}$ and the Lyapunov weight; (ii) retains the full expressiveness of spectrally bounded linear dynamics; (iii) admits low-rank specializations capturing dominant latent modes; and (iv) supports well-conditioned inverse propagation when the spectrum is bounded away from zero. These guarantees hold uniformly across all instantiations of \ldkf~built on top of the \patchtst, \autoformer, and \informer~backbones.

%%%%%%%%%%%%%%%%%%%%%%%%%%%%%%%%%%%%%%%%
%%%%%%%%%%%%%%%%%%%%%%%%%%%%%%%%%%%%%%%%
%%%%%%%%%%%%%%%%%%%%%%%%%%%%%%%%%%%%%%%%

\subsubsection{Theoretical stability advantage of the Koopman layer}
\label{subsec:koopman_stability_advantage}
We compare the latent dynamics induced by the proposed Koopman layer with those
of an unconstrained linear state--space model (SSM). We have shown how all
\ldkf~variants propagate a latent state $z_t \in \mathbb{R}^d$ according to
\begin{equation}\label{eq:koop_dynamics}
z_{t+1} = \K_\phi \, z_t, 
\end{equation}
where $\K_\phi$ is parameterised via an orthogonal--diagonal--orthogonal
(ODO) factorisation,
\begin{equation}
\K_\phi = U_\phi\,\mathrm{diag}\!\big(\Sigma_\phi \big)\,V_\phi^\top .
\end{equation}
Here $U_\phi, V_\phi \in \mathbb{R}^{d\times r}$ have orthonormal columns, obtained through QR retraction, and the spectral coefficients $\Sigma_\phi \in \mathbb{R}^r$ are generated from unconstrained raw parameters $S$ via a componentwise squashing map
\begin{equation}
  \Sigma_{i,\phi}(\theta)
  = \rho_{\max}\,\sigma\!\big(S_{i}\big),
  \quad
  \sigma:\mathbb{R}\to(0,1),
  \quad
  0 < \rho_{\max} < 1 .
  \label{eq:spectral_squashing}
\end{equation}

%In contrast, the SSM baseline evolves a hidden state
%$h_t \in \mathbb{R}^{d_h}$ as 
%\begin{equation}
%  h_{t+1} = A h_t + B x_t,
%  \qquad
%  y_T = C h_T ,
%  \label{eq:ssm_baseline}
%\end{equation}
%where $A,B,C$ are completely unconstrained.

In contrast, the SSM baseline evolves a hidden state $h_t \in \mathbb{R}^{d_h}$ as
\begin{equation}
h_{t+1} = A h_t + B x_t, \qquad \hat y_t = C h_t,
\label{eq:ssm_baseline}
\end{equation}
where $A,B,C$ are completely unconstrained.
%In the following proposition we discuss spectral bound and contraction of the Koopman layer.
%\begin{proposition}
%\label{prop:koopman_spectral_bound}
%Let $\K(\theta)$ be defined as above with $0<\rho_{\max}<1$. Then, for all
%$\theta$,
%\begin{equation}
%  \rho\!\big(\K(\theta)\big)
%  \;\le\;
%  \|\K(\theta)\|_2
%  \;=\;
%  \max_i |\Sigma_i(\theta)|
%  \;<\;
%  \rho_{\max},
%\end{equation}
%Beyond this hard spectral constraint, training includes a Lyapunov-inspired
%regulariser
%\begin{equation}
  %\mathcal{L}_{\mathrm{Lyap}}
%  =
  %\mathbb{E}\!\left[
    %\max\big\{0,\,
    %\|z_{t+1}\|_2^2 - \|z_t\|_2^2
    %\big\}
  %\right],
%\end{equation}
%which further biases optimisation toward strict contraction of latent energy.
In particular, no analogous spectral and contraction constraints to the ones for $\mathcal{K}_\phi$ are imposed on the SSM transition matrix $A$ in
\eqref{eq:ssm_baseline}. In the following Theorem we formally address core stability advantage of the Koopman parameterisation.

\begin{theorem}
\label{thm:core_stability_advantage}
Let $\mathcal{K}_\phi(\rho_{\max})$ denote the class of Koopman operators realised by
the ODO-based spectral--squashing parameterisation, such that
$\|\K_\phi\|_2 < \rho_{\max} < 1$ for all $\K_\phi \in \mathcal{K}_\phi({\rho_{\max}})$.
Let $\mathcal{A}$ denote the class of unconstrained state--space transition matrices. Then: (i) For every $\K_\phi \in \mathcal{K}_\phi({\rho_{\max}})$, the latent dynamics $z_{t+1} = \K_\phi z_t$ are uniformly exponentially stable at all training iterates. (ii) The class $\mathcal{A}$ contains matrices with arbitrarily large spectral radius; moreover, for any finite prediction horizon, there exist unstable $A \in \mathcal{A}$ that achieve the same finite-horizon training loss as a stable model.
\end{theorem}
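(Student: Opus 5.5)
Part (i) follows from the earlier results once we note the key point: the stability bound is a property of the \emph{parameterisation}, not of any particular parameter value. For every raw parameter vector $S$, every choice of gating parameters $(\alpha,\beta)$, $(\alpha_i,\beta_i)$ or $g_\phi$, and every pair of orthonormal factors produced by QR retraction, the squashing map \eqref{eq:spectral_squashing} gives $\Sigma_{\phi,i}\in(0,\rho_{\max})$. In the low-rank case, Proposition~\ref{prop:lowrank} gives $\|\K_\phi\|_2=\max_i|\Sigma_{\phi,i}|$; in the full-rank case, Proposition~\ref{prop:spectral_stability} gives the same identity. Since every training iterate is produced by this map from some parameter vector, the bound holds at every iterate.

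Corollary~\ref{cor:contraction} then yields $\|\K_\phi^n z_0\|_2\le\rho_{\max}^n\|z_0\|_2$. The rate $\rho_{\max}$ and the constant $1$ are independent of $\phi$ and of the iteration count. This is exactly uniform exponential stability over the class $\mathcal{K}_\phi(\rho_{\max})$.

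For the first claim of (ii), a one-parameter family suffices: take $A=cI$ with $c\to\infty$, so that $\rho(A)=c$. Since $\mathcal{A}$ imposes no constraints, this settles it.

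The second claim of (ii) is the substantive one. Fix a finite horizon (window length $P$, horizon $H$) and any stable model $(A_s,B,C)$ achieving some training loss $L$; training uses only finitely many steps of \eqref{eq:ssm_baseline} from a fixed initial state $h_0$. I would build an unstable model with identical outputs by augmenting the state: set $h'=(h,w)$ with
\[
A'=\begin{pmatrix}A_s&0\\0&\lambda\end{pmatrix},\qquad B'=\begin{pmatrix}B\\0\end{pmatrix},\qquad C'=\begin{pmatrix}C&0\end{pmatrix},
\]
where $|\lambda|>1$ is arbitrary. Take the initial condition $w_0=0$, or alternatively allow $w_0\neq 0$, since $C'$ ignores the $w$-component. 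Then the unobservable mode never affects $\hat y_t$, so every prediction, and hence the finite-horizon loss, equals that of the stable model. Meanwhile $\rho(A')\ge|\lambda|>1$.

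If the state dimension $d_h$ must stay fixed, I would instead take a stable model whose $C$ has a nontrivial kernel direction $v$ not excited by $B$. The cleaner variant is to perturb $A_s$ by $\lambda vv^\top$ on an invariant unobservable, uncontrollable subspace, chosen so that it is absent from the input path. A second alternative is to use that outputs over finitely many steps depend only on the finitely many Markov parameters $CA^kB$, $k<N$, which leave room to move eigenvalues outside the unit disk.

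The main obstacle is pinning down precisely what "same finite-horizon training loss" quantifies over, and ensuring the construction respects the fixed dimension of $\mathcal{A}$. I would resolve it with the block construction, noting that $\mathcal{A}$ is the class of all transition matrices of any admissible dimension. The contrast with (i) is then immediate: the loss cannot distinguish these models, whereas the Koopman class excludes unstable operators by construction.
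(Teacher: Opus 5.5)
Your proposal is correct and matches the paper's proof. Part (i) follows from Proposition~\ref{prop:spectral_stability} and Corollary~\ref{cor:contraction}; your appeal to Proposition~\ref{prop:lowrank} for the rank-$r$ case is a small addition the paper omits. Part (ii) uses $A=\gamma I$ and the same block-diagonal augmentation: an unexcited, unobserved block ($\gamma I$ in the paper, your scalar $\lambda$) started at $w_0=0$.

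The paper also enlarges the state dimension without comment, so your fixed-dimension alternatives are unnecessary. Note that the Markov-parameter variant would fail in general. For a minimal stable model of order $d_h$, the first $2d_h$ Markov parameters determine $A$ up to similarity among order-$d_h$ realizations, so its eigenvalues cannot be moved.
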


\begin{proof}
\paragraph{(i) Stability of the Koopman parameterisation} 
%Proposition~\ref{prop:spectral_stability} establishes that $\|\K\|_2 < \rho_{\max} < 1$ uniformly over all training iterates. Hence \[
  %\|z_t\|_2
  %= \|\K_\phi^t z_0\|_2
  %\le \|\K_\phi\|_2^t \|z_0\|_2
  %\le \rho_{\max}^t \|z_0\|_2,\] which proves uniform exponential stability.
By Proposition~\ref{prop:spectral_stability} and Corollary~\ref{cor:contraction}, for all $\K_\phi \in \mathcal{K}_\phi(\rho_{\max})$ the latent dynamics satisfy (see \eqref{cor_1})
\begin{equation}
\|z_t\|_2 \le \rho_{\max}^t \|z_0\|_2,
\end{equation}
which proves uniform exponential stability.
\color{black}

\paragraph{(ii) Unstable but loss-equivalent SSM solutions}
The unconstrained class $\mathcal{A}$ contains matrices with arbitrarily large
spectral radius: for any $\gamma>0$, $A=\gamma I$ satisfies $\rho(A)=\gamma$.
Now consider a stable SSM
$h_{t+1}=Ah_t+Bx_t$, $\hat y_t = C h_t$
that achieves a given finite-horizon loss over $t=0,\dots,H$.
Define an augmented state
$\tilde h_t = [h_t^\top,\, w_t^\top]^\top$ with dynamics
\begin{equation}\label{htwt}
  \tilde h_{t+1}
  =
  \begin{bmatrix}
    A & 0\\
    0 & \gamma I
  \end{bmatrix}
  \tilde h_t
  +
  \begin{bmatrix}
    B\\
    0
  \end{bmatrix}
  x_t,
  \qquad
  \hat y_t =
  \begin{bmatrix}
    C & 0
  \end{bmatrix}
  \tilde h_t,
\end{equation}
where $\gamma>1$ and $\tilde h_0 = [h_0^\top,\,0^\top]^\top$.
Then $w_t \equiv 0$ for all $t\le H$, so the augmented model produces identical
outputs and achieves the same finite-horizon loss, while its transition matrix
has spectral radius $\gamma>1$. Choosing $\gamma$ arbitrarily large completes
the proof.
\end{proof}

%%%%%%%%%%%%%%%%%%%%%%%%%%%%%%%%%%%%%%%%
%%%%%%%%%%%%%%%%%%%%%%%%%%%%%%%%%%%%%%%%
%%%%%%%%%%%%%%%%%%%%%%%%%%%%%%%%%%%%%%%%

\subsubsection{Invertibility in finite dimensions and Banach space extension}

Let $(\mathcal{X}, \|\cdot\|)$ be a Banach space and let
$T:\mathcal{X}\to\mathcal{X}$ denote the (possibly nonlinear) time-$1$
evolution map of an underlying dynamical system.
The Koopman operator acting on observables $J:\mathcal{X}\to\mathbb{R}$ is
\begin{equation}
  (\mathcal{K}_\phi J)(x) \;=\; J\big(T(x)\big),
\end{equation}
which is linear even if $T$ is nonlinear. If $T$ is bijective, then $\mathcal{K}_\phi$ is invertible with
$\mathcal{K}_\phi^{-1}J = J\circ T^{-1}$. Thus invertibility of the latent propagator corresponds to bijectivity of the underlying state update. In \dkoopformer~we learn a \emph{finite-dimensional approximation} of
$\mathcal{\K}_\phi$ using an ODO-structured linear operator
\[
\K_\phi 
= U_\phi\,\mathrm{diag}(\Sigma_\phi)\,V_\phi^\top,
\quad
U_\phi,V_\phi\in O(d),\;\; \Sigma_i \in (0,\rho_{\max}) .
\]
%Because $0<\Sigma_{i,\phi}<\rho_{\max}<1$, the operator $\K_\phi$ belongs to the group
%$\mathrm{GL}(d)$ and satisfies \(\K_\phi^{-1} = V_\phi\,\mathrm{diag}(\Sigma_\phi^{-1})\,U^\top, \quad \|\K_\phi\|_2 \le \rho_{\max}, \quad \|\K_\phi^{-1}\|_2 \le 1/\sigma_{\min}(\K_\phi) < 1/\rho_{\max}. \)
Because $0 < \Sigma_{i,\phi} < \rho_{\max} < 1$, the operator $\K_\phi$ is invertible and belongs to the general linear group
\[
GL(d) := \{ M \in \mathbb{R}^{d \times d} \mid \det(M) \neq 0 \},
\]
that is, the group of all invertible $d \times d$ real matrices. Its inverse is given by
\[
\K_\phi^{-1} = V_\phi \, \mathrm{diag}(\Sigma_\phi^{-1}) \, U_\phi^\top,
\]
with operator norm bounds $\|K_\phi\|_2 \le \rho_{\max}$ and
$\|\K_\phi^{-1}\|_2 \le 1/\sigma_{\min}(\K_\phi) < 1/\rho_{\max}$. 

%This provides a finite-dimensional surrogate of an invertible Koopman semigroup: \( \K_\phi^n \to 0 \;\text{as}\; n\to\infty, \quad \K_\phi^{-n} \;\text{well-defined and bounded for all $n$}.\) In contrast, in a generic SSM with transition matrix $A$, invertibility and boundedness fail in general: $A$ may be singular, defective, or unstable, and even if $A$ is invertible, its condition number is uncontrolled. Thus the Koopman operator used in \ldkf~forms a stable, invertible, finite-dimensional representation of an underlying invertible dynamics, while an unconstrained SSM provides no such guarantees.
This yields a finite-dimensional surrogate of an invertible discrete-time Koopman evolution operator, for which $\K_\phi^n \to 0$ as $n \to \infty$ while backward iterates $K_\phi^{-n}$ remain well-defined and bounded.

In contrast, standard state-space models with unconstrained transition matrices $A$ (as commonly used in classical linear SSMs and recent deep state-space architectures) provide no such guarantees: $A$ may be singular, ill-conditioned, or unstable, and even when invertible its spectral properties are not explicitly controlled \cite{gu2021s4,smith2022simplified,hamilton2020time}.

Consequently, the Koopman propagator used in Learnable-DeepKoopFormer yields a stable, invertible, finite-dimensional representation of latent dynamics, whereas unconstrained SSM formulations lack explicit spectral and stability guarantees.

%%%%%%%%%%%%%%%%%%%%%%%%%%%%%%%%%%%%%%%%
%%%%%%%%%%%%%%%%%%%%%%%%%%%%%%%%%%%%%%%%
%%%%%%%%%%%%%%%%%%%%%%%%%%%%%%%%%%%%%%%%

\section{Numerical Simulations}\label{sec:simulations}

To evaluate the proposed \ldkf~architecture in a controlled yet systematically scalable setting, we employ a unified Koopformer benchmark pipeline on multivariate real-valued time series. From each dataset we construct sliding windows of length $P$ and corresponding forecast targets of length $H$, using a grid of input–output configurations with $P \in \{80, 100, 120, 140\}$ and $H \in \{4, 8, 12, 16\}$. The windows are normalised feature-wise and split into training and test sets using an $80\%/20\%$ chronological partition.

\paragraph{Model hyperparameters}
To ensure a fair comparison, we keep the capacity of all architectures fixed across experiments. For both \patchtst~and \informer, the Transformer encoder uses $L=3$ layers with hidden dimension $d_{\mathrm{model}} = 96$, $h = 4$ attention heads, and feed-forward width $d_{\mathrm{ff}} = 96$. The \autoformer~backbone employs the same number of layers and heads with $d_{\mathrm{model}} = 96$ and a slightly narrower feed-forward width $d_{\mathrm{ff}} = 64$. In all cases, the patch size (for \patchtst~and \informer) and the moving-average window (for \autoformer) are tied to the input window length $P$, and token-level representations are pooled into a single latent vector.

The Koopman layer in the constrained and learnable regimes acts on a latent state of dimension
$d_{\mathrm{lat}} = d_{\mathrm{model}}$ for \patchtst~and \informer, and $d_{\mathrm{lat}} = H$ for \autoformer. The strictly stable and learnable variants enforce an upper bound $\rho(\K_\phi) < \rho_{\max}$ with $\rho_{\max} = 0.99$, whereas the unconstrained variant uses a free dense matrix in $\mathbb{R}^{d_{\mathrm{lat}} \times d_{\mathrm{lat}}}$. Within the \ldkf~family, the  \ssdkf~and \pmcdkf~operators use global and dimension-wise $(\alpha,\beta)$ parameters, respectively; \mlpdkf~employs a one-hidden-layer MLP to parameterise the spectrum; and the \lrdkf~variant uses rank $r = 16$, i.e.\ $\K_\phi = U_{r,\phi} \operatorname{diag}(\Sigma_{r,\phi}) V_{r,\phi}^\top$ with $U_r, V_r \in \mathbb{R}^{d_{\mathrm{lat}} \times 16}$.

For the non-Koopman baselines, the LSTM forecaster uses $L=2$ recurrent layers with hidden size $96$, the DLinear model implements a single shared linear map from the length-$P$ history of each channel to the $H$-step forecast, and the discrete-time state-space model uses a latent state of dimension $96$ with a linear readout to $\mathbb{R}^{H \cdot d}$.  All models are trained with the Adam optimizer (learning rate $3 \times 10^{-4}$) for $4{,}000$ gradient steps per $(P,H)$ configuration; the Lyapunov regularisation weight is set to $\lambda_{\mathrm{Lyap}} = 0.1$ for constrained, learnable, and unconstrained Koopman variants unless stated otherwise.

\paragraph{Spectral logging and diagnostics}
For all models with an explicit linear propagator, we record spectral quantities that describe stability and latent energy dynamics. In Koopman–based architectures, the update
$z_{t+1} = \K_\phi z_t$ is controlled by the spectrum of the learned transition matrix $\K_\phi$, which determines how latent signals contract or amplify over time.

In the constrained and learnable ODO variants, the transition operator is written in factorised form
\(
  \K_\phi = U_\phi \Sigma_\phi V_\phi^\top,
\)
where $U_\phi$ and $V_\phi$ are orthonormal and $\Sigma_\phi = \mathrm{diag}(\sigma_{i,\phi})$ is produced by a squashing map restricting each $\sigma_{i,\phi}$ to $(0,\rho_{\max})$ with $\rho_{\max}<1$. We therefore log the singular values $\{\sigma_i(\K_\phi)\}$ and in particular track $\max_i \sigma_i(\K_\phi)$ as a proxy for the spectral radius. This provides a direct view of how different parametrisations ( \texttt{scalar, per–mode, MLP–based, low–rank}) populate the disc of radius $\rho_{\max}$.

To disentangle the effect of spectral structure in the latent dynamics, we evaluate three linear-transition regimes: \emph{constrained Koopman} (\texttt{constr}), \emph{unconstrained Koopman} (\texttt{unconstr}), and a standard \emph{linear state-space model (SSM)} baseline. In the constrained regime, the latent propagator is parameterized via an orthogonal--diagonal--orthogonal (ODO) factorization,
$\K_\phi = U_\phi \mathrm{diag}(\Sigma_\phi) V_\phi^\top$, where the spectral coefficients are squashed to satisfy
$\max_i |\Sigma_{\phi,i}| < \rho_{\max} < 1$, ensuring a uniform contraction bound and stable latent rollouts.
In the unconstrained Koopman regime, we retain the same encoder--propagator--decoder pipeline but replace the ODO structure with a free dense matrix
$\K_\phi \in \mathbb{R}^{d\times d}$, learned directly from data without explicit spectral-radius constraints; this variant maximizes linear expressiveness but may admit unstable or poorly conditioned transitions, hence we monitor its eigen-spectrum $\{\lambda_i(K_\phi)\}$ and spectral radius $\rho(\K_\phi)=\max_i |\lambda_i(\K_\phi)|$ during training.
As a non-Koopman baseline, we consider a discrete-time linear SSM of the form
$h_{t+1} = A h_t + B x_t$ with prediction $\hat{y}_t = C h_t$, where the transition matrix $A$ is likewise learned without explicit spectral control.
Because $A$ is generally non-normal, transient growth is governed by the operator norm rather than eigenvalues; accordingly, we log its singular values (in particular the maximum singular value of $A$) as a stability/conditioning diagnostic.
Together, these three regimes provide a controlled comparison between (i) explicitly spectrally shaped latent dynamics (constrained Koopman), (ii) fully free linear latent dynamics within the Koopman pipeline (unconstrained Koopman), and (iii) classical unconstrained linear state-space evolution (SSM).
\color{black}

%In contrast, the unconstrained Koopman \color{red}[THE only UNCONSTRAINED operators mentioned in the paper was the "linear SSM" in (34)!]\color{black} operators do not impose spectral limits, and the matrix $\K_\phi$ is learned freely. Here we log the full eigenvalue set $\{\lambda_i(\K_\phi)\}$ together with the spectral radius \(
  %\rho(\K_\phi) = \max_i |\lambda_i(\K_\phi)|, \) which reveals whether training pushes the dynamics toward marginal stability on the unit circle or toward more strongly contractive behaviours.

%Finally, for the linear state–space baseline \color{red}[IS it the same of (34)?WHY now B and C are missing?]\color{black} $h_{t+1} = Ah_t$ we record the singular values $\{\sigma_i(A)\}$ rather than eigenvalues, because $A$ is generally non-normal and transient amplification is governed by the operator norm $\|A\|_2$. The largest singular value controls worst–case growth via \( \|A^k\|_{2} \leq \sigma_{\max}(A)^k.\)

Overall, this spectral logging procedure converts each trained model into a spectral data point, enabling distributional comparisons (e.g., violin plots) that expose stability–accuracy trade–offs across Koopman parametrisations and backbones.

\paragraph{Benchmark Datasets}\label{sec:data_set}
To evaluate the forecasting performance of \ldkf~we investigate three heterogeneous and dynamically rich real-world domains: \emph{climate systems}, \emph{financial markets}, and \emph{electricity generation}. These domains exhibit nonstationarity, multiscale temporal dependencies, and nonlinear interactions—conditions under which classical deep forecasting models often face difficulties.

For climate analysis and renewable energy applications, we employ atmospheric data from CMIP6\footnote{\url{https://cds.climate.copernicus.eu/datasets/projections-cordex-domains-single-levels?tab=overview}} and ERA5\footnote{\url{https://cds.climate.copernicus.eu/datasets}}, capturing regional wind speed and surface pressure over Germany~\cite{makula2022coupled, hersbach2020era5, forootani_cadnn_2025}. As a complementary nonlinear system, we analyze cryptocurrency market behavior using a publicly available multivariate dataset\footnote{\url{https://github.com/Chisomnwa/Cryptocurrency-Data-Analysis}}, which includes volatility, pricing trends, and trading activity for major digital assets—an archetypal example of stochastic, regime-switching dynamics. Finally, we examine trends in national energy production by modeling electricity generation in Spain\footnote{\url{https://github.com/afshinfaramarzi/Energy-Demand-electricity-price-Forecasting/tree/main}} across fossil, wind, solar, hydro, and biomass sources, where physical constraints and weather dependencies induce complex temporal couplings. Together, these datasets span distinct dynamical regimes and provide a rigorous foundation for assessing the ability of Koopman-enhanced representations to extract latent linear structure from nonlinear temporal processes.

\paragraph{Hardware Configuration}
All experiments were executed on a high-performance computing (\texttt{HPC}) system with dual \texttt{AMD EPYC 9554} processors (128 cores per node, multithreaded) and 1.5\,TB shared memory, suitable for large context windows and attention-based models. Training was accelerated using \texttt{NVIDIA L40S GPUs} supporting mixed-precision tensor operations. Resource allocation and job scheduling were managed through the \texttt{Slurm} workload manager with exclusive node access to ensure reproducible performance.

\paragraph{Code availability statement}
The \ldkf~framework---including source code, datasets, and representative figures---is accessible at \texttt{Github} \footnote{\url{https://github.com/Ali-Forootani/Learnable-DeepKoopFormer}} and \texttt{Zenodo} \footnote{\url{ https://doi.org/10.5281/zenodo.17988424}, \url{https://doi.org/10.5281/zenodo.18115612}}. The entire implementation is written in \texttt{Python} and builds upon standard scientific computing libraries, including \texttt{PyTorch}, \texttt{NumPy}, and \texttt{SciPy}.

%%%%%%%%%%%%%%%%%%%%%%%%%%%%%%%%%%%%%%%%%%%%%%%%%
%%%%%%%%%%%%%%%%%%%%%%%%%%%%%%%%%%%%%%%%%%%%%%%%%
%%%%%%%%%%%%%%%%%%%%%%%%%%%%%%%%%%%%%%%%%%%%%%%%%

\subsection{Discussion on Wind Speed dataset}\label{sec:wind_speed}

For the CMIP6‐based wind–speed forecasting task,
Fig.~\ref{fig:wind_error} depicts the distributions of train/test MSE and MAE across all patch–length and horizon configurations. The Koopman–enhanced Transformers (\patchtst, \autoformer, \informer) exhibit \emph{tightly concentrated, low-centred} violins in all four panels, indicating consistently low errors and a remarkable stability across forecasting conditions. In contrast, the recurrent and linear baselines (LSTM, DLinear, SSM) display \emph{higher-centred} distributions with visibly broader shapes, reflecting larger error levels and increased sensitivity to patch length and horizon changes. Within the Koopman family, the \texttt{constr} variant shows the most compact and low-error violins, particularly for \patchtst, highlighting the effect of spectral constraints on stabilising latent dynamics. The learnable variants (\ssdkf, \pmcdkf, \mlpdkf, \lrdkf) maintain similarly low medians with slightly wider spreads, suggesting a trade-off between expressive dynamics and variability. The \texttt{unconstr} operator produces noticeably broader tails—especially in the \patchtst~ backbone—which implies that removing spectral control can reduce robustness, even though median errors remain competitive.

Overall, based on violin distributions alone, Koopman–Transformer models are not only more accurate than recurrent or linear baselines, but also substantially more stable across a wide range of forecasting configurations.

\paragraph{Koopman spectra across backbones - Wind Speed}
For each trained configuration (choice of patch length and forecast horizon), we extract the spectrum of the latent transition operator and pool all spectral magnitudes $\rho = |\lambda|$ at the level of \autoformer, \informer, and \patchtst\ backbones and their Koopman variants. Figure~\ref{fig:wind_spectra} reports these pooled distributions as violins, together with the singular-value spectrum of the SSM baseline. Across all backbones, the strictly stable Koopman variants (``\texttt{constr}'') exhibit compact spectra: the bulk of the mass lies between approximately $\rho \approx 0.3$ and $\rho \approx 0.8$, with essentially no values above~$1$, confirming that the spectral-radius constraint is respected in practice. The learnable families ( \ssdkf, \pmcdkf, \mlpdkf, \lrdkf) occupy a similar central range but display visibly broader violins and slightly heavier upper tails, indicating that these parametrisations exploit a larger portion of the admissible disc while still remaining predominantly contractive. In contrast, the unconstrained Koopman variants (``\texttt{unconstr}'') concentrate their spectra very close to the origin, with most magnitudes below $\rho \approx 0.2$, which reflects the combined effect of data fitting and the Lyapunov regularisation term driving the latent dynamics towards strongly damped behaviour. The SSM baseline forms a narrow distribution at small spectral magnitudes (around $\rho \approx 0.05$–$0.2$) with a thin tail extending towards and \emph{beyond the unit circle}, consistent with occasional large singular values of the state-transition matrix in an unconstrained linear state-space model. Overall, the figure shows that the dominant differences in spectral behaviour are induced by the Koopman parametrisation (constrained vs.\ learnable vs.\ unconstrained) rather than by the specific Transformer backbone, and that the Koopman-enhanced models achieve expressive yet predominantly stable latent linear dynamics.

\begin{figure*}
    \centering
    \includegraphics[width=0.99\linewidth]{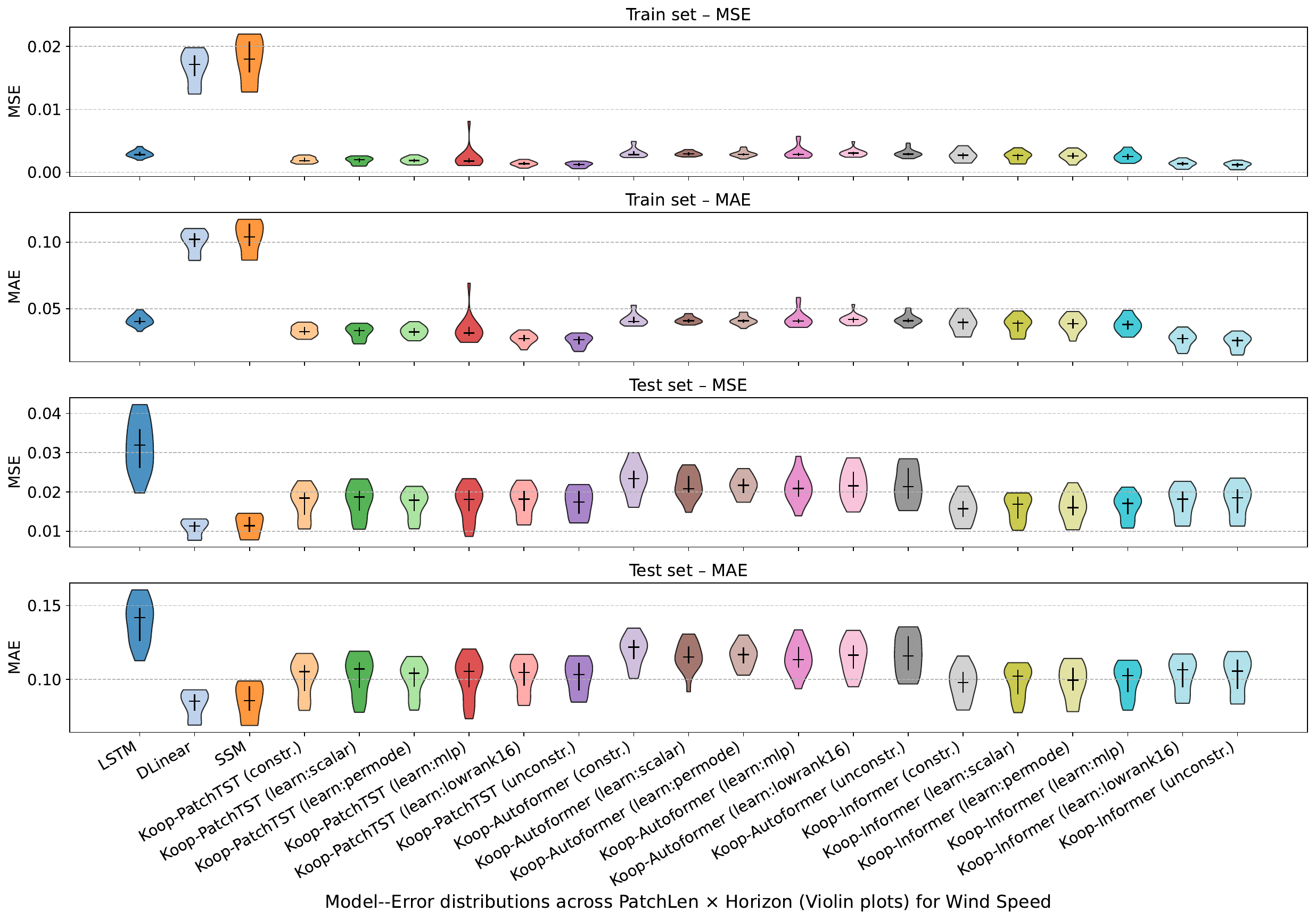}
    \caption{Violin plots of train/test MSE and MAE for all architectures on
    the wind speed forecasting task, aggregated over all patch lengths and
    horizons.}
    \label{fig:wind_error}
\end{figure*}

\begin{figure}
    \centering
    \includegraphics[width=1.01\linewidth]{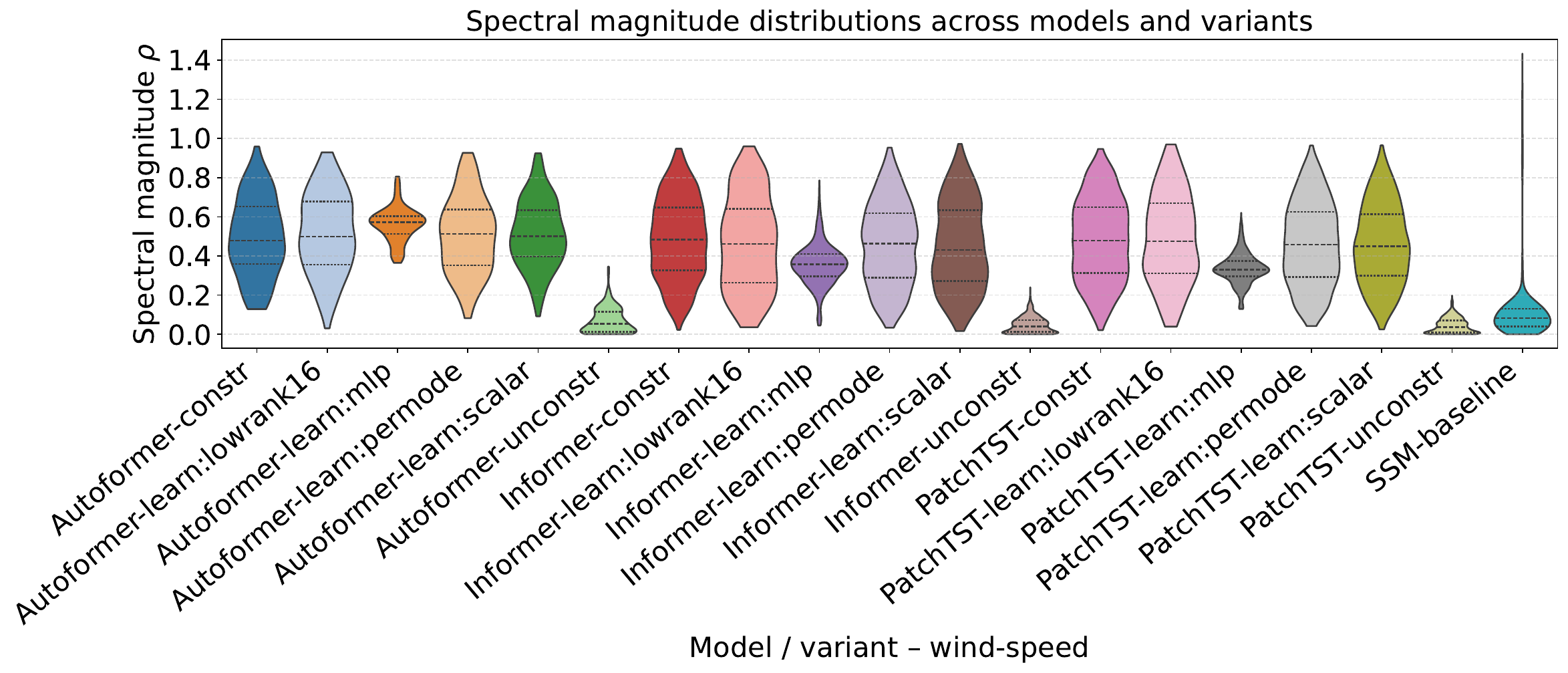}
    \caption{Spectral distributions of latent Koopman operators for CIMIP6 wind speed forecasting task.}
    \label{fig:wind_spectra}
\end{figure}

\subsection{Discussion on Pressure Surface dataset}\label{sec:pressure_surface}

For the CMIP6-based pressure–surface forecasting task, 
Fig.~\ref{fig:pressure_error} summarises the error distributions over all
patch–length and horizon combinations via violin plots of train/test MSE and
MAE. Across the four violin panels, the Koopman–Transformer families
(\patchtst, \autoformer, \informer) concentrate most of their mass in the lower
part of the error axis, with visibly tight and compact distributions. In
contrast, the purely recurrent and linear baselines (LSTM, DLinear, SSM) exhibit
higher-centred violins with noticeably wider spread in several cases. Within the Koopman–enhanced models, the constrained operator (\texttt{constr}) yields the most compact and low-variance distributions across all backbones. The learnable operator families (\ssdkf, \pmcdkf, \mlpdkf, \lrdkf) produce slightly broader violins, reflecting increased expressive
capacity with a small variability trade-off. By contrast, the unconstrained operator (\texttt{unconstr}) exhibits heavier tails—especially for \patchtst{} and \informer—even when its median error remains competitive, indicating reduced robustness when no spectral regularisation is enforced. A notable exception emerges in the test–set summary: while Koopman variants remain among the most stable across patch lengths and horizons, \emph{certain
baseline configurations} (particularly DLinear and SSM) achieve \emph{slightly lower mean test errors} for the pressure–surface task. However, these gains are offset by substantially wider violin spreads, indicating less predictable generalisation across forecasting conditions. In contrast, the Koopman family achieves comparable or near-best means with significantly reduced spread, suggesting a more stable latent dynamics model that balances expressive powerand robustness rather than relying on chance favourable configurations.

\paragraph{Koopman spectra across backbones - Pressure Surface}

For each trained configuration on the pressure–surface data, we extract all
latent spectral magnitudes $\rho = |\lambda|$ and pool them by backbone and
Koopman variant. Figure~\ref{fig:pressure_spectra} shows the resulting
distributions. Across \autoformer, \informer, and \patchtst, the constrained variants
(\texttt{constr}) concentrate their spectra in a compact band
$\rho \approx 0.3$–$0.8$, with no values exceeding~$1$, indicating that the
spectral–radius constraint is enforced in practice. The learnable variants
(\ssdkf, \pmcdkf, \mlpdkf, \lrdkf) occupy a similar range but with broader
distributions and mildly heavier tails, reflecting increased expressive
capacity while remaining largely contractive. The unconstrained variants (\texttt{unconstr}) collapse toward strongly damped dynamics, with most magnitudes below $\rho \approx 0.2$. The SSM baseline likewise concentrates at small values but exhibits a thin upper tail extending
towards and \emph{occasionally above the unit circle}. Overall, differences in spectral
behavior arise predominantly from the Koopman parameterisation rather than from
the choice of Transformer backbone, and Koopman–enhanced models realise stable
latent dynamics for large–scale pressure–surface forecasting.

\begin{figure*}
    \centering
    \includegraphics[width=0.99\linewidth]{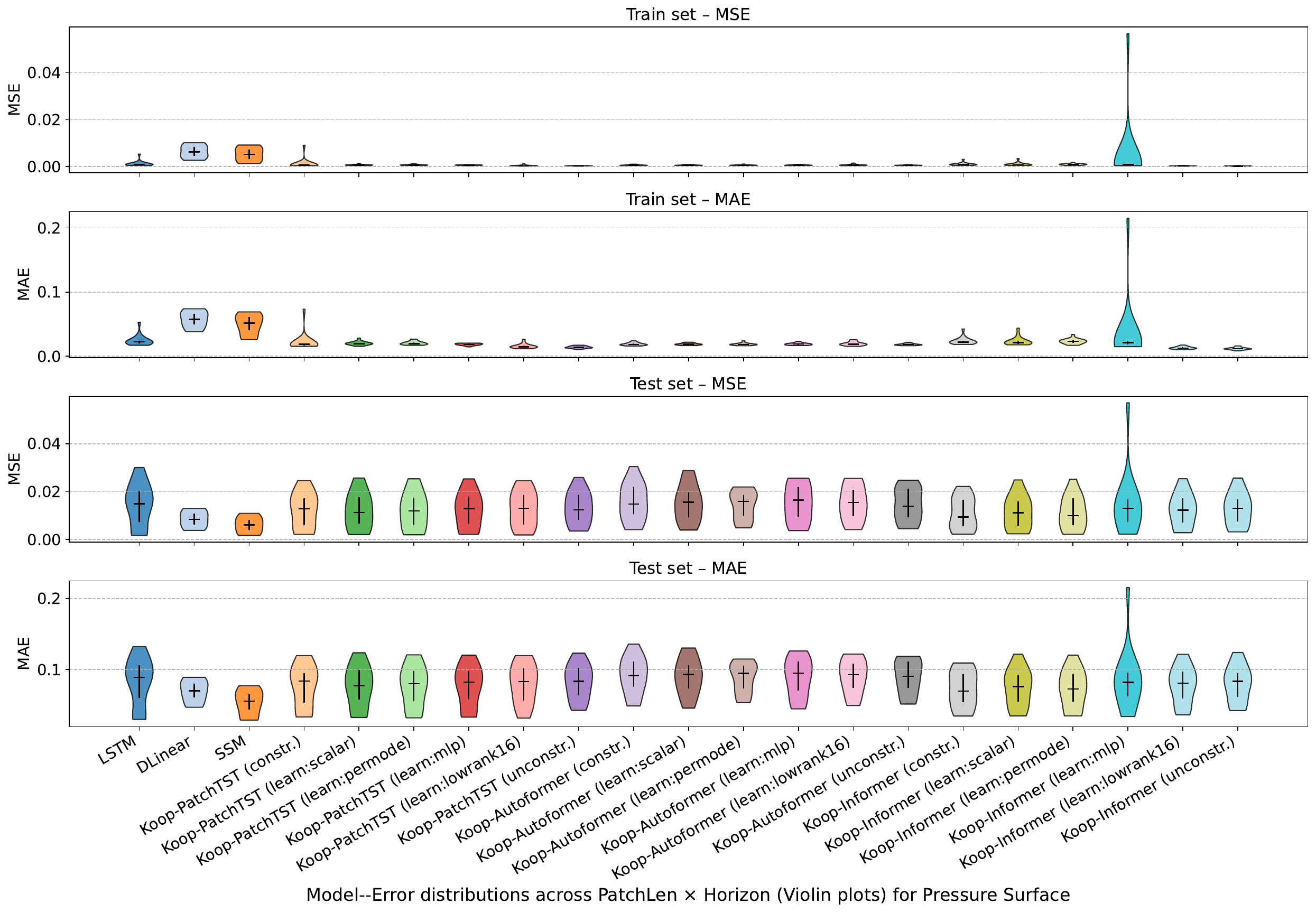}
    \caption{Violin plots of train/test MSE and MAE for all architectures on
    the pressure surface forecasting task, aggregated over all patch lengths and
    horizons.}
    \label{fig:pressure_error}
\end{figure*}

\begin{figure}
    \centering
    \includegraphics[width=1.01\linewidth]{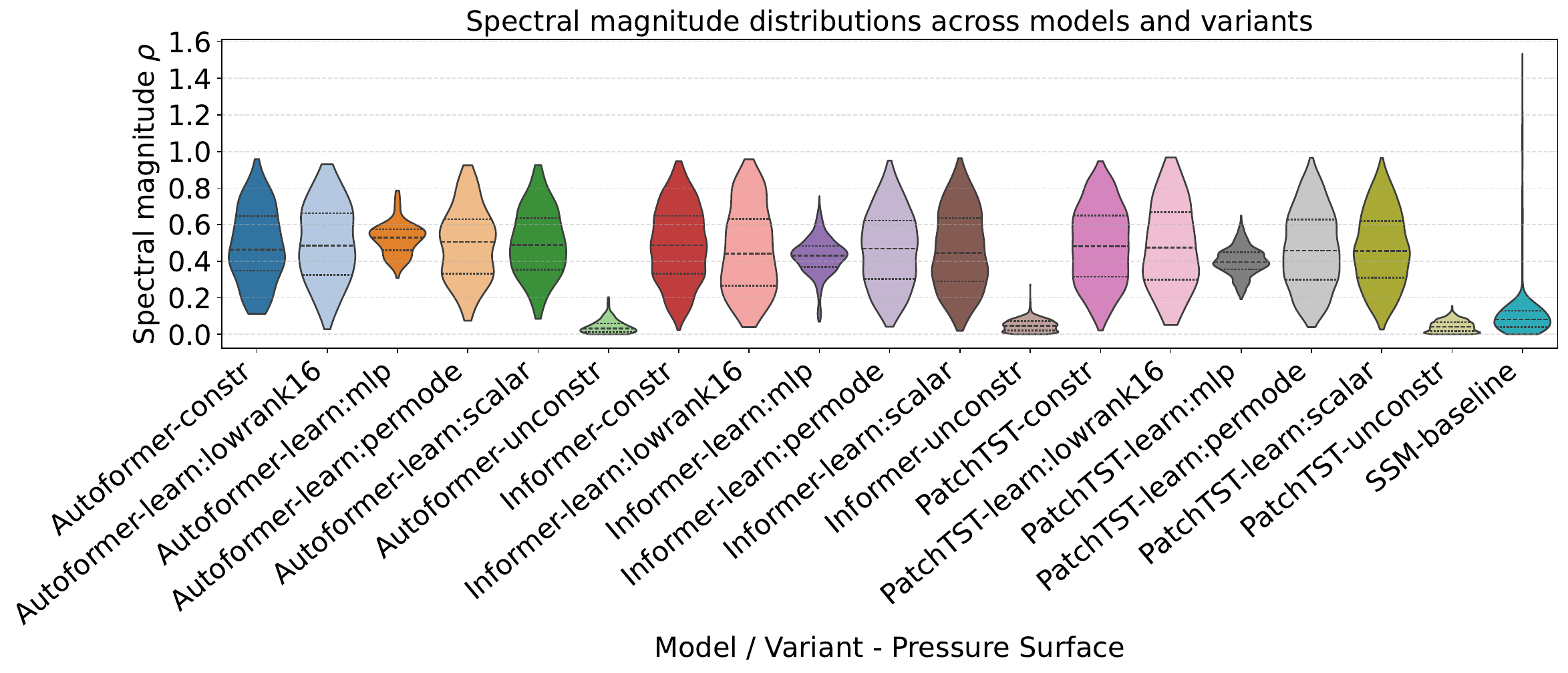}
    \caption{Spectral distributions of latent Koopman operators for CIMIP6 pressure surface forecasting task.}
    \label{fig:pressure_spectra}
\end{figure}

\subsection{Discussion on Cryptocurrency dataset}\label{sec:crypto_forecast}

For the cryptocurrency forecasting task, Fig.~\ref{fig:crypto_error} reports the distribution of train/test MSE and MAE across all patch–length and horizon configurations using violin plots. The Koopman–Transformer families (\patchtst, \autoformer, \informer) generally form compact, low-centred distributions, indicating stable forecasting behaviour across hyperparameter choices. In contrast, the recurrent and linear baselines (LSTM, DLinear, SSM) display visibly broader spreads and higher variability, suggesting more sensitive performance to architectural and hyperparameter changes. Among Koopman variants, the constrained operator (\texttt{constr}) achieves the most concentrated error distributions on both training and test sets, highlighting the benefits of explicit spectral stabilisation in highly fluctuating crypto time series. Learnable operator families (\ssdkf, \pmcdkf, \mlpdkf, \lrdkf) obtain competitive or near-best median errors, albeit with slightly wider spreads that reflect the trade-off between flexibility and stability. By contrast, the unconstrained operator (\texttt{unconstr}) exhibits heavier tails for \patchtst{} and \informer, despite a competitive median value, illustrating reduced robustness when no spectral control is imposed.

%Overall, the cryptocurrency results indicate that the Koopman-augmented models, particularly with spectral constraints, provide a stable forecasting mechanism under volatile dynamics. In comparison, baseline architectures occasionally match median performance but exhibit higher sensitivity to patch–length and horizon variation, resulting in less predictable generalisation.

\paragraph{Koopman spectra across backbones — Cryptocurrency}

For the cryptocurrency forecasting experiments, we collect all latent spectral magnitudes $\rho = |\lambda|$ from the trained models and group them by backbone and Koopman variant. Figure~\ref{fig:crypto_spectra} displays the resulting distributions across \patchtst, \autoformer, and \informer. Across all three backbones, the constrained Koopman operator (\texttt{constr}) yields compact spectra centred around $\rho \approx 0.3$–$0.6$, with no values approaching the unit circle. This consistent boundedness indicates that the spectral constraint is effectively enforced, promoting stable latent evolution even under the pronounced volatility of crypto price dynamics. The learnable operator families (\ssdkf, \pmcdkf, \mlpdkf, \lrdkf) occupy similar or slightly wider ranges, typically $\rho \approx 0.3$–$0.8$, and exhibit moderately heavier tails. This reflects their increased expressive capacity: they can adaptively shape latent temporal scales while remaining largely contractive in practice. In contrast, the unconstrained variant (\texttt{unconstr}) collapses toward strongly damped spectra with $\rho < 0.2$, producing overly dissipative latent dynamics. Meanwhile, the SSM baseline concentrates at small magnitudes but shows a distinct thin tail extending toward and beyond $\rho \approx 1.0$, indicating occasional unstable latent modes in the absence of explicit spectral control.

\begin{figure*}
    \centering
    \includegraphics[width=0.99\linewidth]{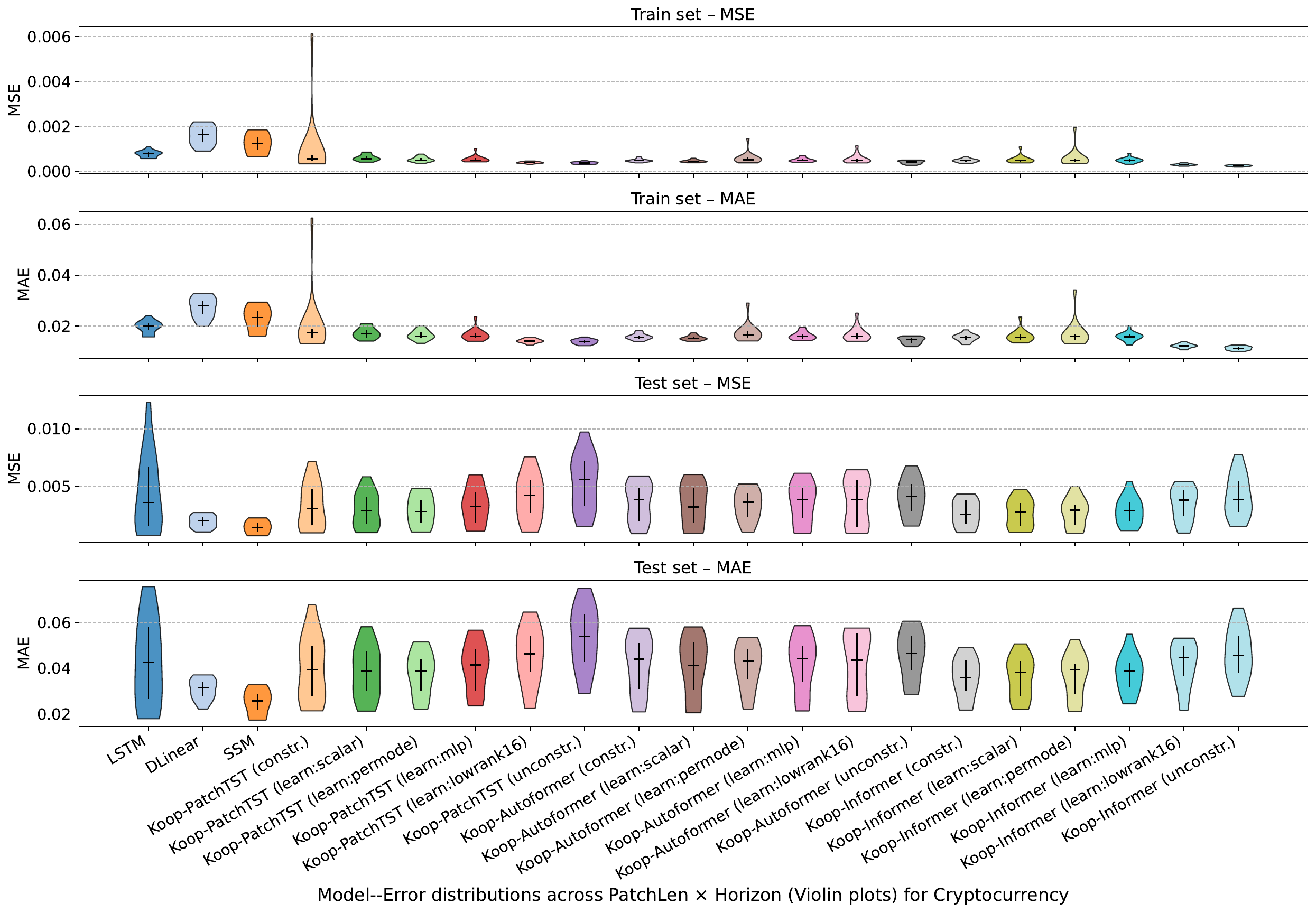}
    \caption{Violin plots of train/test MSE and MAE for all architectures on
    the Cryptocurrency forecasting task, aggregated over all patch lengths and
    horizons.}
    \label{fig:crypto_error}
\end{figure*}

\begin{figure}
    \centering
    \includegraphics[width=1.02\linewidth]{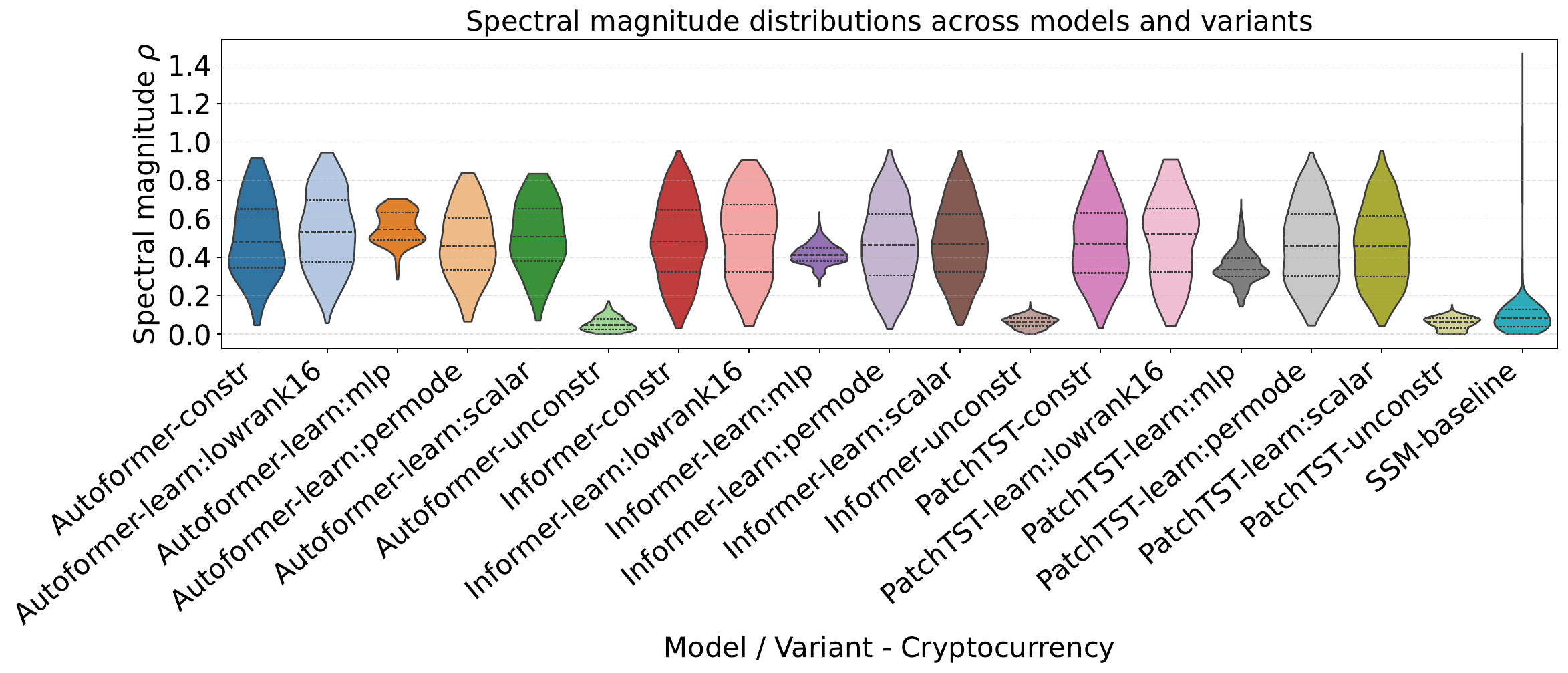}
    \caption{Spectral distributions of latent Koopman operators for the Cryptocurrency forecasting task.}
    \label{fig:crypto_spectra}
\end{figure}

\subsection{Discussion on Energy Systems Dataset}
\label{sec:energy_forecast}
Figure~\ref{fig:energy_error} shows the distribution of train/test MSE and MAE
over all patch--length and horizon settings for the energy systems task.
Unlike the previous domains, the \autoformer{} backbone performs noticeably
worse: both its baseline and Koopman variants produce high-centred violins with
broad tails, indicating weaker forecasting accuracy and a strong sensitivity to
hyperparameter choices. The LSTM baseline achieves very low training errors with compact violins, showing strong short-term adaptability. However, its test errors spread widely,
revealing limited generalisation and a tendency to overfit. Similarly, DLinear
and SSM occasionally attain competitive medians but exhibit inconsistent spreads
across configurations, suggesting dependence on specific window and horizon
settings rather than robust behaviour. Across all backbones, Koopman-enhanced variants yield more reliable performance. In particular, the constrained operator (\texttt{constr}) consistently produces tight, mid-centred violins with comparatively low variability. While not always the best in terms of median error, it is the most stable across hyperparameters,
highlighting the benefit of enforcing spectral bounds in the latent dynamics for slow-varying energy system trends. Learnable operators (\ssdkf, \pmcdkf, \mlpdkf, \lrdkf) achieve competitive medians with slightly broader spreads, reflecting a trade-off between expressiveness and robustness. Overall, the energy systems results demonstrate that minimising training error alone (e.g., LSTM or \autoformer) does not guarantee reliable forecasting. Koopman-enhanced models, particularly with spectral constraints, prioritise
stable generalisation across hyperparameter choices, making them well suited for
forecasting tasks governed by long-range temporal structure.

\paragraph{Koopman spectra across backbones — Energy Systems}

Figure~\ref{fig:energy_spectra} shows latent spectral magnitudes
$\rho = |\lambda|$ for all model variants in a single violin plot. The
constrained Koopman operator (\texttt{constr}) consistently produces compact
spectra around $\rho \approx 0.3$--$0.6$, clearly away from $\rho=1$, indicating
stable latent evolution suitable for slowly varying energy trends. The learnable
operator families (\ssdkf, \pmcdkf, \mlpdkf, \lrdkf) span slightly wider ranges
($\rho \approx 0.3$--$0.8$), reflecting greater flexibility while remaining
largely contractive. In contrast, the unconstrained variant
(\texttt{unconstr}) collapses toward very small radii ($\rho < 0.2$), implying
overly damped dynamics, whereas the SSM baseline shows predominantly small
values with a thin tail beyond $\rho \geq 1$, revealing occasional unstable
modes. Overall, explicit spectral control enables stable temporal modelling in
energy systems data.

\begin{figure*}
    \centering
    \includegraphics[width=0.99\linewidth]{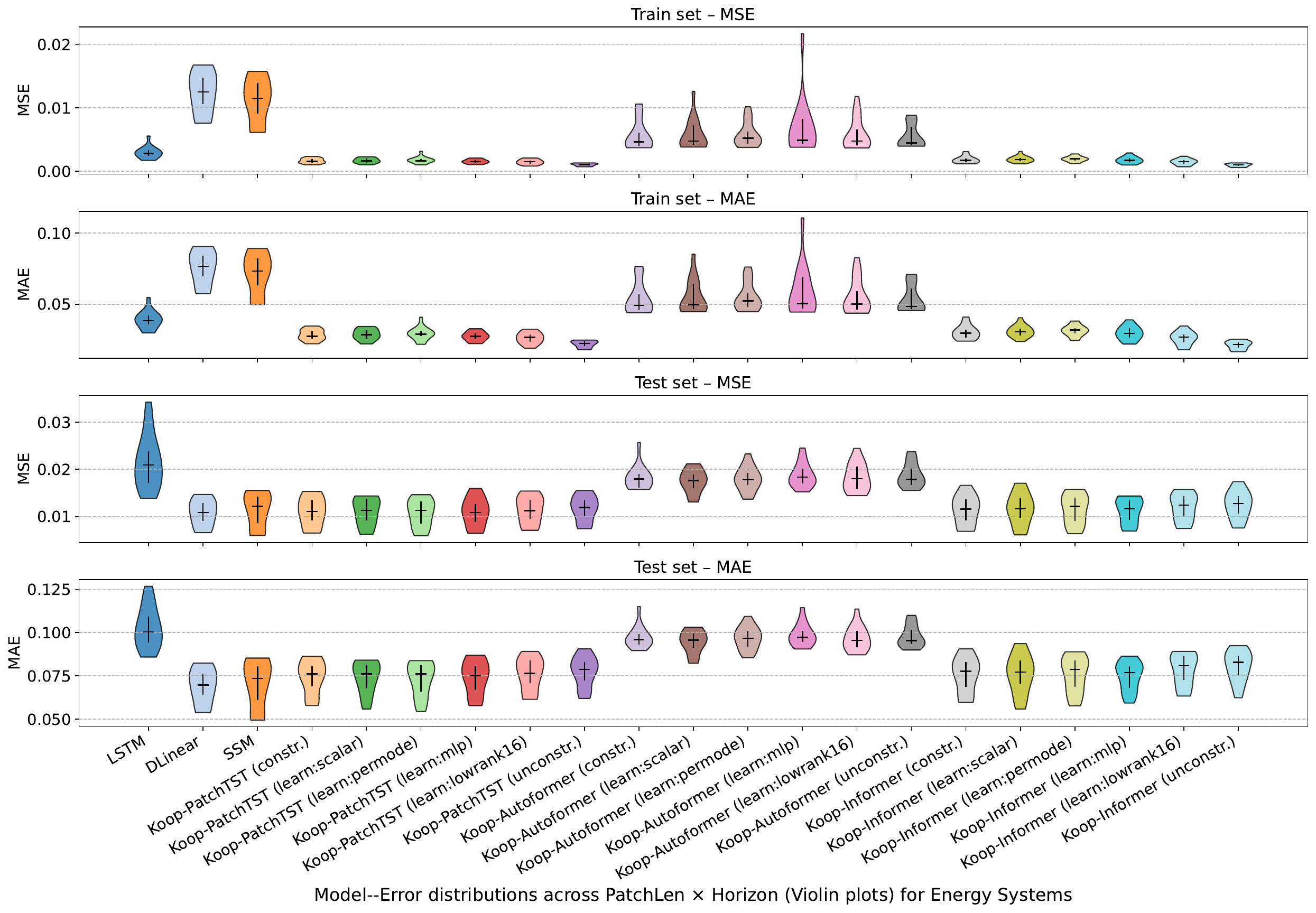}
   \caption{Violin plots of train/test MSE and MAE for all architectures on
    the energy systems forecasting task, aggregated over all patch lengths and
    horizons.}
    \label{fig:energy_error}
\end{figure*}

\begin{figure}
    \centering
    \includegraphics[width=1.02\linewidth]{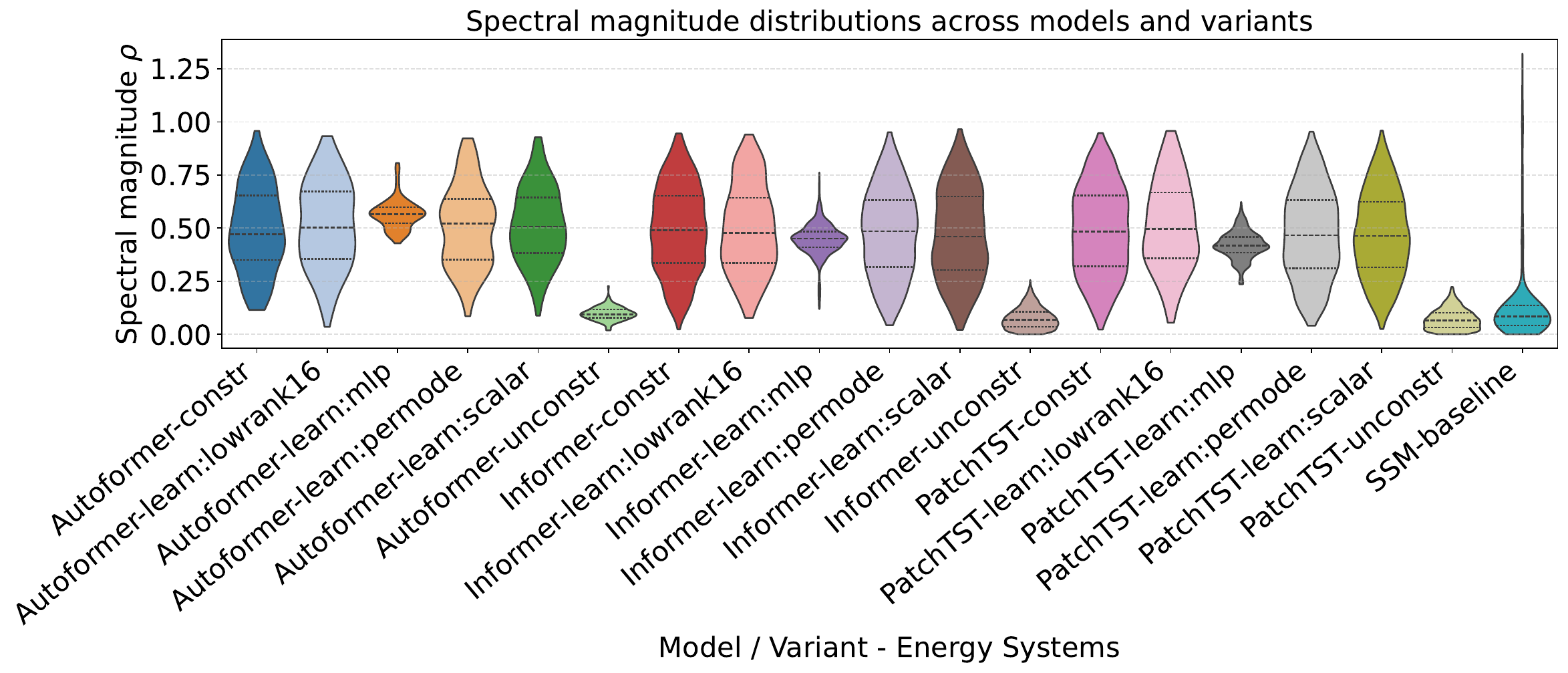}
    \caption{Spectral distributions of latent Koopman operators for the energy systems  forecasting task.}
    \label{fig:energy_spectra}
\end{figure}

\subsection{Discussion on ERA5 Wind Speed Dataset}
\label{sec:era5_forecast}

Figure~\ref{fig:era5_error} reports train/test error distributions for ERA5
wind forecasting across all patch--length and horizon settings. While most
methods reach low medians due to smooth wind dynamics, their robustness differs
substantially. LSTM, DLinear, and SSM produce competitive medians but wider distributions,
with LSTM displaying compact training violins yet broader test spreads,
indicating mild overfitting. DLinear and SSM fluctuate strongly across
settings, limiting reliability. By contrast, Koopman--enhanced models form consistently tight, low-centred violins. The constrained operator (\texttt{constr}) exhibits the most compact train/test distributions, demonstrating stable latent dynamics. Learnable
variants (\ssdkf, \pmcdkf, \mlpdkf, \lrdkf) achieve similar medians with
slightly broader spreads, balancing adaptivity and spectral stability. 

%These results show that robust wind forecasting benefits more from stable latent evolution than aggressive curve fitting, with spectrally regularised Koopman designs offering the most reliable performance.

\paragraph{Koopman spectra across backbones - ERA5 Wind Speed}
Figure~\ref{fig:era5_spectra} shows latent spectral magnitudes $\rho = |\lambda|$ for all model variants in a single violin plot. The constrained Koopman operator (\texttt{constr}) consistently produces compact spectra around $\rho \approx 0.3$--$0.6$, clearly away from $\rho=1$, indicating stable latent evolution suitable for slowly varying energy trends. The learnable operator families (\ssdkf, \pmcdkf, \mlpdkf, \lrdkf) span slightly wider ranges
($\rho \approx 0.3$--$0.8$), reflecting greater flexibility while remaining
largely contractive. In contrast, the unconstrained variant
(\texttt{unconstr}) collapses toward very small radii ($\rho < 0.2$), implying
overly damped dynamics, whereas the SSM baseline shows predominantly small
values with a thin tail beyond $\rho \geq 1$, revealing occasional unstable
modes. Overall, explicit spectral control enables stable temporal modelling in
energy systems data.

%\paragraph{Koopman spectra — ERA5 Wind Speed}
%Figure~\ref{fig:era5_spectra} shows that the constrained Koopman operator (\texttt{constr}) produces compact, mid-range spectra ($\rho \approx 0.3$--$0.6$), indicating stable latent evolution. Learnable variants extend to higher radii (up to $\rho \approx 0.9$), offering more flexibility while remaining largely contractive. The unconstrained operator collapses to $\rho < 0.2$, implying over-damped dynamics, while the SSM baseline concentrates at small magnitudes with a thin unstable tail ($\rho > 1$). Thus, spectral control yields stable and predictable latent dynamics for ERA5 wind forecasting.

\begin{figure*}
    \centering
    \includegraphics[width=0.99\linewidth]{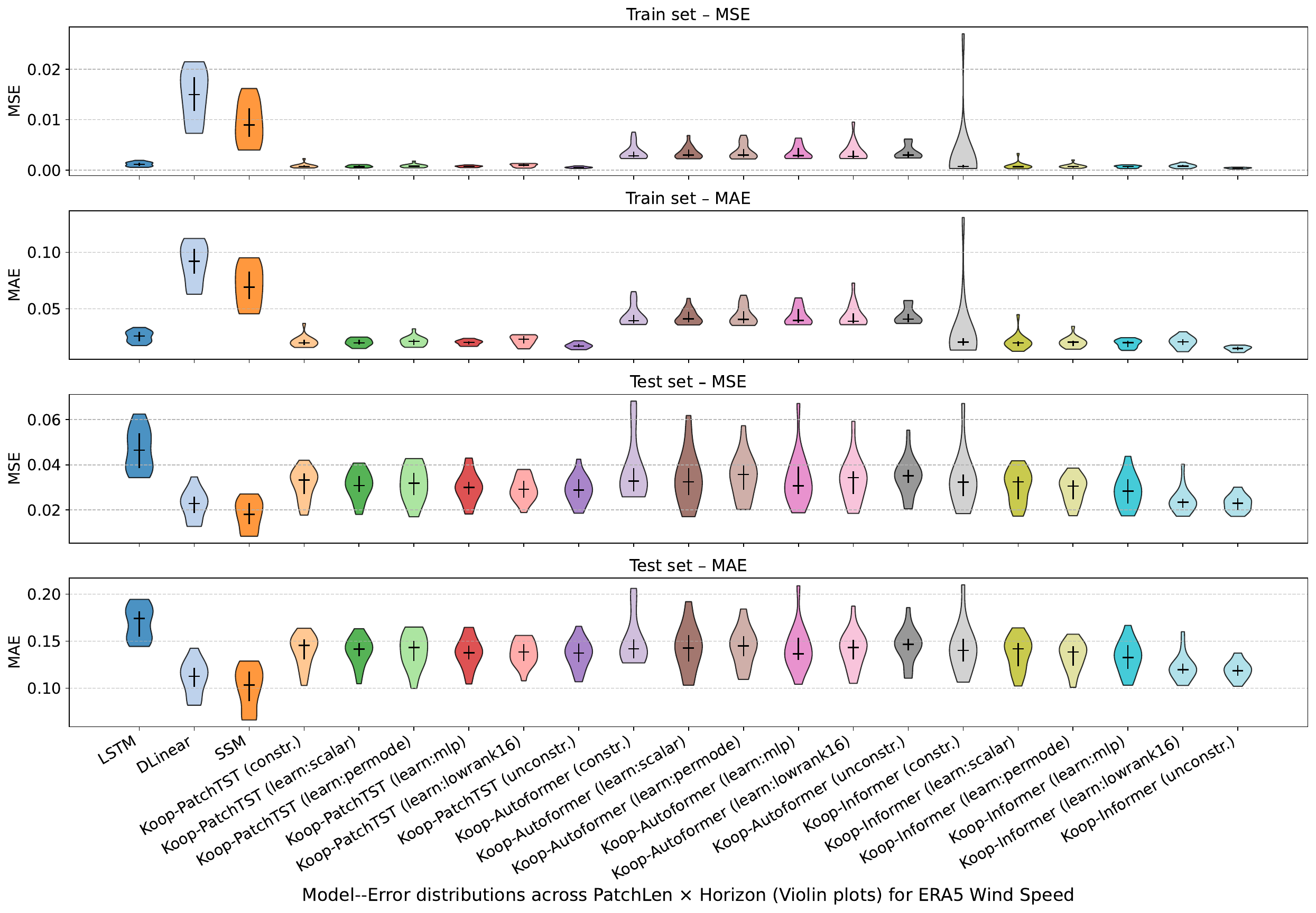}
   \caption{Violin plots of train/test MSE and MAE for all architectures on
    the ERA5 wind speed forecasting task, aggregated over all patch lengths and
    horizons.}
    \label{fig:era5_error}
\end{figure*}

\begin{figure}
    \centering
    \includegraphics[width=1.02\linewidth]{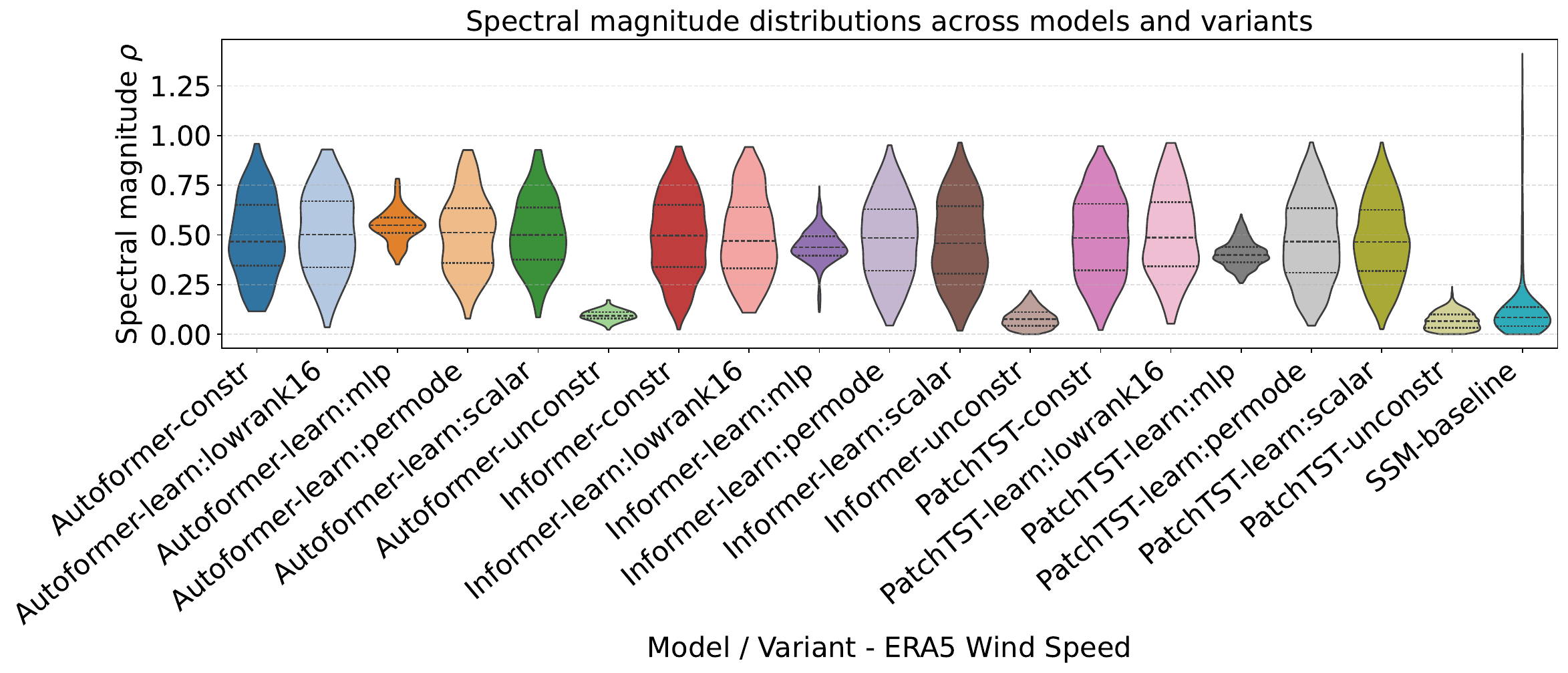}
    \caption{Spectral distributions of latent Koopman operators for ERA5 wind speed forecasting task.}
    \label{fig:era5_spectra}
\end{figure}

%%%%%%%%%%%%%%%%%%%%%%%%%%%%%%%%%%%%%%%%%%%%%%%%%%%%
%%%%%%%%%%%%%%%%%%%%%%%%%%%%%%%%%%%%%%%%%%%%%%%%%%%%

\subsection{Advantages of \ldkf~Modules.}
The spectral analysis highlights a key benefit of the proposed learnable Koopman 
modules: they adapt the latent time scales without sacrificing stable and 
reversible dynamics. Whereas purely constrained variants tightly restrict the 
spectrum, and unconstrained models tend to collapse into strongly damped states 
($|\lambda| \rightarrow 0$), the learnable families populate a non-collapsed 
but contractive range ($0.3 \lesssim |\lambda| \lesssim 0.8$). This region 
provides enough variability to capture both fast and slow temporal modes, 
while keeping the spectral radius below unity so that the latent dynamics 
remain stable. Crucially, because eigenvalues do not decay toward zero, the 
resulting operators are \emph{invertible}, enabling well-posed backward 
propagation and preventing information loss over long-range iterative 
forecasts. Thus, learnability does not merely increase flexibility; it yields 
\emph{invertible and non-degenerate latent dynamics}, offering richer temporal 
structure than constrained models and more reliable long-horizon behaviour 
than unstable or over-damped baselines.

%%%%%%%%%%%%%%%%%%%%%%%%%%%%%%%%%%%%%%%%%%%%%%%%%%
%%%%%%%%%%%%%%%%%%%%%%%%%%%%%%%%%%%%%%%%%%%%%%%%%%
%%%%%%%%%%%%%%%%%%%%%%%%%%%%%%%%%%%%%%%%%%%%%%%%%%

\section{Conclusion}\label{sec:conclusion}

This paper introduced the \ldkf~ framework, which augments lightweight and full Transformer forecasters with a family of spectrally controlled, learnable Koopman operators. Building on an orthogonal--diagonal--orthogonal (ODO) factorisation, we proposed scalar--gated, per--mode, MLP--shaped, and low--rank Koopman variants that interpolate between strictly constrained and fully unconstrained linear latent dynamics while preserving explicit control over spectrum, stability, and rank. We provided a unified training objective combining standard forecasting loss with Lyapunov--style regularisation, and established theoretical guarantees on spectral stability, contractivity, low--rank structure, and invertibility of the resulting propagators. Together, these results show that Koopman layers can be integrated into modern attention-based backbones in a way that is both mathematically principled and practically compatible with large-scale time-series forecasting.

\vspace{0.75em}
Extensive experiments across five heterogeneous real-world datasets---CMIP6 and ERA5 wind speed, CMIP6 surface pressure, cryptocurrency markets, and national-scale electricity generation---demonstrate that Koopman-enhanced Transformers achieve robust and accurate forecasting over a wide grid of input--output configurations. Across all domains, the Koopman variants embedded in \patchtst, \autoformer, and \informer~generally yield lower and more tightly concentrated train/test error distributions than recurrent (LSTM) and linear (\dlinear, SSM) baselines, indicating reduced sensitivity to patch length and horizon choices. Constrained Koopman operators systematically provide the most compact error violins and the most stable generalisation behaviour, while the learnable families (\ssdkf, \pmcdkf, \mlpdkf, \lrdkf) attain comparable or near-best median errors with modestly increased spread, reflecting a favourable trade-off between expressiveness and robustness. Unconstrained operators can match median accuracy but exhibit heavier tails, underscoring that the absence of spectral control tends to degrade reliability rather than improve it.

\vspace{0.75em}
The spectral diagnostics further clarify the role of the Koopman layer in shaping latent dynamics. Constrained variants consistently populate a mid-range contractive band ($0.3 \lesssim |\lambda| \lesssim 0.8$), ensuring non-explosive but non-collapsed evolution. Learnable variants occupy a similar range with slightly broader support, indicating that they exploit the admissible stability region to adjust latent time scales while remaining predominantly contractive. In contrast, unconstrained Koopman maps often collapse toward very small radii (over-damped dynamics), whereas the SSM baseline exhibits thin but non-negligible tails beyond the unit circle, confirming that unrestricted state-space models can drift into spectrally unstable regimes despite good finite-horizon fit. Overall, the spectral analysis shows that the primary differences in latent behaviour are induced by the Koopman parametrisation rather than by the particular Transformer backbone, and that \ldkf~achieves stable, interpretable, and invertible latent dynamics across diverse dynamical regimes.

\vspace{0.75em}
From a modelling perspective, the learnable Koopman modules offer a key structural advantage: they adapt latent temporal scales \emph{without sacrificing stability or invertibility}. By avoiding eigenvalues that cluster near zero or exceed the unit circle, the proposed operators maintain non-degenerate, well-conditioned dynamics that support long-range forecasting without catastrophic decay or uncontrolled growth. This makes \ldkf~particularly suitable for applications where both predictive accuracy and dynamical interpretability matter, such as climate risk assessment, energy systems planning, and financial stress testing, and suggests that spectral control is a powerful inductive bias for deep sequence models.

\vspace{0.75em}
Several directions remain open for future work. On the theoretical side, extending the analysis to explicitly stochastic or non-stationary settings, and connecting spectral constraints to distributional robustness and uncertainty quantification, are promising avenues. On the modelling side, integrating control inputs and exogenous covariates into the Koopman layer would enable joint forecasting and control for cyber--physical systems, while exploring richer backbones (e.g., state--space models with Koopman regularisation, graph-based or spatio--temporal Transformers) could further broaden applicability. Finally, scaling \ldkf~to larger benchmarks and online or continual-learning scenarios would help assess how learnable Koopman operators behave under persistent distribution shift. We view this work as a step toward a principled synthesis of operator-theoretic dynamics and modern deep forecasting, and anticipate that Koopman--enhanced architectures will become a useful building block in the broader toolbox of interpretable, stable sequence models.

%%%%%%%%%%%%%%%%%%%%%%%%%%%%%%%%%%%%%%%%%%%%%%%%%%%%%%%
%%%%%%%%%%%%%%%%%%%%%%%%%%%%%%%%%%%%%%%%%%%%%%%%%%%%%%%
%%%%%%%%%%%%%%%%%%%%%%%%%%%%%%%%%%%%%%%%%%%%%%%%%%%%%%%
%%%%%%%%%%%%%%%%%%%%%%%%%%%%%%%%%%%%%%%%%%%%%%%%%%%%%%%

\appendix

\begin{algorithm}
%\scriptsize
\caption{\dkoopformer~for Multivariate Time Series Forecasting}
\label{alg:koopformer}
\text{Input:} Multivariate time series $\{ x_t\}_{t=1}^T$, context length $P$, forecast horizon $H$\\
\text{Output:} Predicted sequence $\hat{ Y}_t = [\hat{x}_{t+P}, \dots, \hat{x}_{t+P+H-1}]$

\begin{algorithmic}[1]
\State \text{Data Preparation:}
    \begin{itemize}
        \item For each valid $t$, construct:
            \begin{align*}
                 X_t &= [ x_t, \dots, x_{t+P-1}] \in \mathbb{R}^{P \times d}, \\
                 Y_t &= [ x_{t+P}, \dots,  x_{t+P+H-1}] \in \mathbb{R}^{H \times d}.
            \end{align*}
    \end{itemize}

\State \text{Latent Encoding:}
    \begin{itemize}
        \item Compute latent vector: $ z_t = \mathcal{E}_\theta( X_t)$, where $\mathcal{E}_\theta$ is a Transformer-based encoder with positional encoding.
    \end{itemize}

\State \text{Koopman Operator Evolution:}
    \begin{itemize}
        \item Propagate the latent state: \(z_{t+1} =  \K_\phi  z_t.\)
        \item $\K_\phi$ is parameterized as: \(\K_\phi =  U_\phi\,\mathrm{diag}( \Sigma_\phi)\, V_\phi^\top,\)
            where $U_\phi, V_\phi$ are orthogonal matrices, $ \Sigma_\phi = \sigmoid( S)\, \rho_{\max}$, where $\sigma(\cdot)$ is the sigmoid function, $S$ raw parameters, and $\rho_{\max} < 1$.
    \end{itemize}

\State \text{Output Decoding (Direct $H$-Step Forecast):}
    \begin{itemize}
        \item Map the refined latent state to the $H$-step forecast via a linear decoder:
            \(
            \hat{ y}_t = \mathcal{D}_\varphi( z_{t+1}) \in \mathbb{R}^{H \cdot d},
            \)
            where $\mathcal{D}_\varphi$ is implemented as a fully connected  linear layer.
        \item Reshape to obtain the forecast sequence:
            \begin{multline*}
               \hat{Y}_t = \mathrm{reshape}\big(\hat{ y}_t,\, H,\, d\big)
            = [\hat{x}_{t+P}, \dots, \hat{x}_{t+P+H-1}] \\ \in \mathbb{R}^{H \times d}. 
            \end{multline*}
    \end{itemize}
\color{black}

\State \text{Loss Function:}
    \begin{align*}
        \mathcal{L} = \|\hat{Y}_t - Y_t\|^2 + \lambda \cdot \mathrm{ReLU}\left(\| z_{t+1}\|^2 - \|z_t\|^2\right).
    \end{align*}

\State \text{Optimization:} Train parameters $(\theta, \phi, \K_\phi)$ end-to-end with Adam.

\State \text{Inference:} At test time, given a context window $X_t$, compute
    \(
    z_t = \mathcal{E}_\theta( X_t), 
    \quad  z_{t+1} =  \K_\phi z_t,
    \)
    and obtain the $H$-step forecast in a single forward pass as
    \[
    \hat{ y}_t = \mathcal{D}_\varphi( z_{t+1}), 
    \qquad
    \hat{ Y}_t = \mathrm{reshape}(\hat{y}_t, H, d).
    \]

\end{algorithmic}
\end{algorithm}

%%%%%%%%%%%%%%%%%%%%%%%%%%%%%%%%%%%%%%%%%%%%%%%%%%%%%%%
%%%%%%%%%%%%%%%%%%%%%%%%%%%%%%%%%%%%%%%%%%%%%%%%%%%%%%%
%%%%%%%%%%%%%%%%%%%%%%%%%%%%%%%%%%%%%%%%%%%%%%%%%%%%%%%
%%%%%%%%%%%%%%%%%%%%%%%%%%%%%%%%%%%%%%%%%%%%%%%%%%%%%%%

\bibliographystyle{ieeetr}

\bibliography{refs/tutorial_bibliography,refs/koopman_ref_tutorial,refs/ref_koopformer, refs/references_brunton, refs/my_ref}

@inproceedings{NIPS2017_3a835d32,
 author = {Takeishi, Naoya and Kawahara, Yoshinobu and Yairi, Takehisa},
 booktitle = {Advances in Neural Information Processing Systems},
 editor = {I. Guyon and U. Von Luxburg and S. Bengio and H. Wallach and R. Fergus and S. Vishwanathan and R. Garnett},
 pages = {},
 publisher = {Curran Associates, Inc.},
 title = {Learning {Koopman} Invariant Subspaces for Dynamic Mode Decomposition},
 volume = {30},
 year = {2017}
}

@INPROCEEDINGS{Skomski2021,
  author={Skomski, Elliott and Vasisht, Soumya and Wight, Colby and Tuor, Aaron and Drgoňa, Ján and Vrabie, Draguna},
  booktitle={2021 American Control Conference (ACC)}, 
  title={Constrained Block Nonlinear Neural Dynamical Models}, 
  year={2021},
  volume={},
  number={},
  pages={3993-4000},
  doi={10.23919/ACC50511.2021.9482930}}

@ARTICLE{Drgona2022,
  author={Drgona, Jan and Tuor, Aaron and Vasisht, Soumya and Vrabie, Draguna},
  journal={IEEE Open Journal of Control Systems}, 
  title={Dissipative Deep Neural Dynamical Systems}, 
  year={2022},
  volume={1},
  number={},
  pages={100-112},
  doi={10.1109/OJCSYS.2022.3186838}}

@INPROCEEDINGS{Yeung2019,
  author={Yeung, Enoch and Kundu, Soumya and Hodas, Nathan},
  booktitle={2019 American Control Conference (ACC)}, 
  title={Learning Deep Neural Network Representations for {Koopman} Operators of Nonlinear Dynamical Systems}, 
  year={2019},
  volume={},
  number={},
  pages={4832-4839},
  doi={10.23919/ACC.2019.8815339}}

@inproceedings{NEURIPS2021_c9dd73f5,
 author = {Drgona, Jan and Mukherjee, Sayak and Zhang, Jiaxin and Liu, Frank and Halappanavar, Mahantesh},
 booktitle = {Advances in Neural Information Processing Systems},
 editor = {M. Ranzato and A. Beygelzimer and Y. Dauphin and P.S. Liang and J. Wortman Vaughan},
 pages = {24033--24047},
 publisher = {Curran Associates, Inc.},
 title = {On the Stochastic Stability of Deep Markov Models},
 volume = {34},
 year = {2021}
}

@article{Lusch2018,
	title        = {Deep Learning for Universal Linear Embeddings of Nonlinear Dynamics},
	author       = {Lusch, Bethany and Kutz, J. Nathan and Brunton, Steven L.},
	year         = 2018,
	month        = nov,
	journal      = {Nature Communications},
	publisher    = {{Nature Publishing Group}},
	volume       = 9,
	number       = 1,
	pages        = 4950,
	issn         = {2041-1723},
	copyright    = {2018 The Author(s)},
	ids          = {Lusch2018a},
}

@article{PIML2021,
title = "Physics-informed machine learning",
author = "Karniadakis, {George Em} and Kevrekidis, {Ioannis G.} and Lu Lu and Paris Perdikaris and Sifan Wang and Liu Yang",
note = "Publisher Copyright: {\textcopyright} 2021, Springer Nature Limited.",
year = "2021",
month = jun,
doi = "10.1038/s42254-021-00314-5",
language = "English (US)",
volume = "3",
pages = "422--440",
journal = "Nature Reviews Physics",
issn = "2522-5820",
publisher = "Springer International Publishing",
number = "6",
}

@INPROCEEDINGS{Truong2023,
  author={Nghiem, Truong X. and Drgoňa, Ján and Jones, Colin and Nagy, Zoltan and Schwan, Roland and Dey, Biswadip and Chakrabarty, Ankush and Di Cairano, Stefano and Paulson, Joel A. and Carron, Andrea and Zeilinger, Melanie N. and Shaw Cortez, Wenceslao and Vrabie, Draguna L.},
  booktitle={2023 American Control Conference (ACC)}, 
  title={Physics-Informed Machine Learning for Modeling and Control of Dynamical Systems}, 
  year={2023},
  volume={},
  number={},
  pages={3735-3750},
  doi={10.23919/ACC55779.2023.10155901}}

@article{forootani2025deepkoopformer,
  title={DeepKoopFormer: A Koopman Enhanced Transformer Based Architecture for Time Series Forecasting},
  author={Forootani, Ali and Khosravi, Mohammad and Barati, Masoud},
  journal={arXiv preprint arXiv:2508.02616},
  year={2025}
}

@article{lin2011discrete,
  title={Discrete Eckart--Young theorem for integer matrices},
  author={Lin, Matthew M},
  journal={SIAM journal on matrix analysis and applications},
  volume={32},
  number={4},
  pages={1367--1382},
  year={2011},
  publisher={SIAM}
}

@article{dlinear,
  title={Are Transformers Effective for Time Series Forecasting?},
  author={Ailing Zeng and Muxi Chen and Lei Zhang and Qiang Xu},
  journal={arXiv preprint arXiv:2205.13504},
  year={2022}
}

@inproceedings{autoformer,
  title={Autoformer: Decomposition Transformers with {Auto-Correlation} for Long-Term Series Forecasting},
  author={Haixu Wu and Jiehui Xu and Jianmin Wang and Mingsheng Long},
  booktitle={Advances in Neural Information Processing Systems},
  year={2021}
}

@inproceedings{fedformer,
  title={{FEDformer}: Frequency enhanced decomposed transformer for long-term series forecasting},
  author={Zhou, Tian and Ma, Ziqing and Wen, Qingsong and Wang, Xue and Sun, Liang and Jin, Rong},
  booktitle={Proc. 39th International Conference on Machine Learning},
  location = {Baltimore, Maryland},
  pages={},
  year={2022}
}

@inproceedings{pyraformer,
title={Pyraformer: Low-Complexity Pyramidal Attention for Long-Range Time Series Modeling and Forecasting},
author={Liu, Shizhan and Yu, Hang and Liao, Cong and Li, Jianguo and Lin, Weiyao and Liu, Alex X and Dustdar, Schahram},
booktitle={International Conference on Learning Representations},
year={2022}
}

@inproceedings{multichannel,
  title={Time series classification using multi-channels deep convolutional neural networks},
  author={Zheng, Yi and Liu, Qi and Chen, Enhong and Ge, Yong and Zhao, J Leon},
  booktitle={International conference on web-age information management},
  pages={298--310},
  year={2014},
  organization={Springer}
}

@article{makula2022coupled,
  title={Coupled Model Intercomparison Project phase 6 evaluation and projection of East African precipitation},
  author={Makula, Exavery Kisesa and Zhou, Botao},
  journal={International Journal of Climatology},
  volume={42},
  number={4},
  pages={2398--2412},
  year={2022},
  publisher={Wiley Online Library}
}

@article{khosravi2023representer,
  title={Representer theorem for learning Koopman operators},
  author={Khosravi, Mohammad},
  journal={IEEE Transactions on Automatic Control},
  volume={68},
  number={5},
  pages={2995--3010},
  year={2023},
  publisher={IEEE}
}

@article{hersbach2020era5,
  title={The ERA5 global reanalysis},
  author={Hersbach, Hans and Bell, Bill and Berrisford, Paul and Hirahara, Shoji and Hor{\'a}nyi, Andr{\'a}s and Mu{\~n}oz-Sabater, Joaqu{\'\i}n and Nicolas, Julien and Peubey, Carole and Radu, Raluca and Schepers, Dinand and others},
  journal={Quarterly journal of the royal meteorological society},
  volume={146},
  number={730},
  pages={1999--2049},
  year={2020},
  publisher={Wiley Online Library}
}

@inproceedings{zeng2023transformers,
  title={Are transformers effective for time series forecasting?},
  author={Zeng, Ailing and Chen, Muxi and Zhang, Lei and Xu, Qiang},
  booktitle={Proceedings of the AAAI conference on artificial intelligence},
  volume={37},
  number={9},
  pages={11121--11128},
  year={2023}
}

@article{forootani_cadnn_2025,
title = {Climate Aware Deep Neural Networks (CADNN) for wind power simulation},
journal = {Array},
volume = {28},
pages = {100534},
year = {2025},
issn = {2590-0056},
doi = {https://doi.org/10.1016/j.array.2025.100534},
url = {https://www.sciencedirect.com/science/article/pii/S2590005625001614},
author = {Ali Forootani and Danial {Esmaeili Aliabadi} and Daniela Thr\"a n},
}

@article{gu2021s4,
  title={Efficiently modeling long sequences with structured state spaces},
  author={Gu, Albert and Goel, Karan and R{\'e}, Christopher},
  journal={arXiv preprint arXiv:2111.00396},
  year={2021}
}

@article{smith2022simplified,
  title={Simplified state space layers for sequence modeling},
  author={Smith, Jimmy TH and Warrington, Andrew and Linderman, Scott W},
  journal={arXiv preprint arXiv:2208.04933},
  year={2022}
}

@book{hamilton2020time,
  title={Time series analysis},
  author={Hamilton, James D},
  year={2020},
  publisher={Princeton university press}
}

@inproceedings{haoyietal-informer-2021,
  author    = {Haoyi Zhou and
               Shanghang Zhang and
               Jieqi Peng and
               Shuai Zhang and
               Jianxin Li and
               Hui Xiong and
               Wancai Zhang},
  title     = {Informer: Beyond Efficient Transformer for Long Sequence Time-Series Forecasting},
  booktitle = {AAAI},
  year      = {2021},
}

@article{BaiTCN2018,
  title={An empirical evaluation of generic convolutional and recurrent networks for sequence modeling},
  author={Bai, Shaojie and Kolter, J Zico and Koltun, Vladlen},
  journal={arXiv preprint arXiv:1803.01271},
  year={2018}
}

@inproceedings{wu2021autoformer,
  title={Autoformer: Decomposition Transformers with {Auto-Correlation} for Long-Term Series Forecasting},
  author={Haixu Wu and Jiehui Xu and Jianmin Wang and Mingsheng Long},
  booktitle={NeurIPS},
  year={2021}
}

@inproceedings{liu2021pyraformer,
  title={Pyraformer: Low-complexity pyramidal attention for long-range time series modeling and forecasting},
  author={Liu, Shizhan and Yu, Hang and Liao, Cong and Li, Jianguo and Lin, Weiyao and Liu, Alex X and Dustdar, Schahram},
  booktitle={ICLR},
  year={2021}
}

@inproceedings{
kim2022reversible,
title={Reversible Instance Normalization for Accurate Time-Series Forecasting against Distribution Shift},
author={Taesung Kim and Jinhee Kim and Yunwon Tae and Cheonbok Park and Jang-Ho Choi and Jaegul Choo},
booktitle={ICLR},
year={2022},
}

@inproceedings{zhou2022fedformer,
  title={{FEDformer}: Frequency enhanced decomposed transformer for long-term series forecasting},
  author={Zhou, Tian and Ma, Ziqing and Wen, Qingsong and Wang, Xue and Sun, Liang and Jin, Rong},
  booktitle={ICML},
  year={2022}
}

@article{gu2021combining,
  title={Combining Recurrent, Convolutional, and Continuous-time Models with Linear State-Space Layers},
  author={Gu, Albert and Johnson, Isys and Goel, Karan and Saab, Khaled and Dao, Tri and Rudra, Atri and R{\'e}, Christopher},
  journal={NeurIPS},
  year={2021}
}

@article{Liu2022NonstationaryTR,
  title={Non-stationary Transformers: Rethinking the Stationarity in Time Series Forecasting},
  author={Yong Liu and Haixu Wu and Jianmin Wang and Mingsheng Long},
  journal={arXiv preprint arXiv:2205.14415},
  year={2022}
}

@article{kuznetsov2020discrepancy,
  title={Discrepancy-based theory and algorithms for forecasting non-stationary time series},
  author={Kuznetsov, Vitaly and Mohri, Mehryar},
  journal={Annals of Mathematics and Artificial Intelligence},
  volume={88},
  number={4},
  pages={367--399},
  year={2020},
  publisher={Springer}
}

@article{nie2022time,
  title={A Time Series is Worth 64 Words: Long-term Forecasting with Transformers},
  author={Nie, Yuqi and Nguyen, Nam H and Sinthong, Phanwadee and Kalagnanam, Jayant},
  journal={arXiv preprint arXiv:2211.14730},
  year={2022}
}

@article{brunton2021modern,
  title={Modern Koopman theory for dynamical systems},
  author={Brunton, Steven L and Budi{\v{s}}i{\'c}, Marko and Kaiser, Eurika and Kutz, J Nathan},
  journal={arXiv preprint arXiv:2102.12086},
  year={2021}
}

@article{morton2018deep,
  title={Deep dynamical modeling and control of unsteady fluid flows},
  author={Morton, Jeremy and Jameson, Antony and Kochenderfer, Mykel J and Witherden, Freddie},
  journal={Advances in Neural Information Processing Systems},
  volume={31},
  year={2018}
}

@inproceedings{azencot2020forecasting,
  title={Forecasting sequential data using consistent koopman autoencoders},
  author={Azencot, Omri and Erichson, N Benjamin and Lin, Vanessa and Mahoney, Michael},
  booktitle={International Conference on Machine Learning},
  pages={475--485},
  year={2020},
  organization={PMLR}
}

@article{lusch2018deep,
  title={Deep learning for universal linear embeddings of nonlinear dynamics},
  author={Lusch, Bethany and Kutz, J Nathan and Brunton, Steven L},
  journal={Nature communications},
  volume={9},
  number={1},
  pages={4950},
  year={2018},
  publisher={Nature Publishing Group UK London}
}

@article{takeishi2017learning,
  title={Learning Koopman invariant subspaces for dynamic mode decomposition},
  author={Takeishi, Naoya and Kawahara, Yoshinobu and Yairi, Takehisa},
  journal={Advances in neural information processing systems},
  volume={30},
  year={2017}
}

@article{williams2015data,
  title={A data--driven approximation of the koopman operator: Extending dynamic mode decomposition},
  author={Williams, Matthew O and Kevrekidis, Ioannis G and Rowley, Clarence W},
  journal={Journal of Nonlinear Science},
  volume={25},
  pages={1307--1346},
  year={2015},
  publisher={Springer}
}

@book{absil2008optimization,
  title={Optimization algorithms on matrix manifolds},
  author={Absil, P-A and Mahony, Robert and Sepulchre, Rodolphe},
  year={2008},
  publisher={Princeton University Press}
}

@book{Mezic2017book,
	Author = {I. Mezi\'c},
	Publisher = {Springer},
	Title = {Spectral operator methods in dynamical systems: {T}heory and applications},
	Year = {2017}}

@article{Korda2016arxiv,
	Author = {Korda, M. and Mezi{\'c}, I.},
	Journal = {arXiv:1611.03537},
	Title = {Linear predictors for nonlinear dynamical systems: {K}oopman operator meets model predictive control},
	Year = {2016}}

@article{Brunton2016plosone,
	Author = {Brunton, S.\hspace{-.02in} L. AND Brunton, B.\hspace{-.02in} W. AND Proctor, J.\hspace{-.02in} L. AND Kutz, J.\hspace{-.02in}N.},
	Journal = {PLoS ONE},
	Number = {2},
	Pages = {e0150171},
	Title = {Koopman invariant subspaces and finite linear representations of nonlinear dynamical systems for control},
	Volume = {11},
	Year = {2016}}

@book{Kutz2016book,
	Author = {J. N. Kutz and S. L. Brunton and B. W. Brunton and J. L. Proctor},
	Publisher = {SIAM},
	Title = {Dynamic Mode Decomposition: Data-Driven Modeling of Complex Systems},
	Year = {2016}}

@article{chen19,
  title={Neural ordinary differential equations},
  author={Chen, Ricky TQ and Rubanova, Yulia and Bettencourt, Jesse and Duvenaud, David K},
  journal={Advances in Neural Information Processing Systems},
  volume={31},
  year={2018}
}

@inproceedings{vialard20,
	author = {Vialard, Fran\c{c}ois-Xavier and Kwitt, Roland and Wei, Suan and Niethammer, Marc},
	booktitle = {Advances in Neural Information Processing Systems},
	title = {A Shooting Formulation of Deep Learning},
	Year = {2020}
}

@article{Croston1972,
	Author = {Croston, J. D.},
	Day = {01},
	Issn = {1476-9360},
	Journal = {Journal of the Operational Research Society},
	Month = {Sep},
	Number = {3},
	Pages = {289--303},
	Title = {Forecasting and Stock Control for Intermittent Demands},
	Volume = {23},
	Year = {1972},
	}

@article{wen2017multi,
	Author = {Wen, Ruofeng and Torkkola, Kari and Narayanaswamy, Balakrishnan and Madeka, Dhruv},
	Journal = {arXiv preprint arXiv:1711.11053},
	Title = {A multi-horizon quantile recurrent forecaster},
	Year = {2017}}

@article{neurips_18_long_tail,
  title={Parsimonious quantile regression of financial asset tail dynamics via sequential learning},
  author={Yan, Xing and Zhang, Weizhong and Ma, Lin and Liu, Wei and Wu, Qi},
  journal={Advances in Neural Information Processing Systems},
  volume={31},
  year={2018}
}

@article{DIMOULKAS2019,
	Author = {Dimoulkas, I. and Mazidi, P. and Herre, L.},
	Journal = {International Journal of Forecasting},
	Number = {4},
	Pages = {1409 - 1423},
	Title = {Neural networks for {GEFCom2017} probabilistic load forecasting},
	Volume = {35},
	Year = {2019}}

@article{Saxena2019,
	Author = {Saxena, Harshit and Aponte, Omar and McConky, Katie T.},
	Journal = {International Journal of Forecasting},
	Number = {4},
	Pages = {1288 - 1303},
	Title = {A hybrid machine learning model for forecasting a billing period's peak electric load days},
	Volume = {35},
	Year = {2019},
}

@inproceedings{li2018diffusion,
  title={Diffusion Convolutional Recurrent Neural Network: Data-Driven Traffic Forecasting},
  author={Li, Yaguang and Yu, Rose and Shahabi, Cyrus and Liu, Yan},
  booktitle={International Conference on Learning Representations},
  year={2018}
}

@article{sen2019think,
  title={Think globally, act locally: A deep neural network approach to high-dimensional time series forecasting},
  author={Sen, Rajat and Yu, Hsiang-Fu and Dhillon, Inderjit S},
  journal={Advances in Neural Information Processing Systems},
  volume={32},
  year={2019}
}

@article{lecun1995convolutional,
	Author = {LeCun, Yann and Bengio, Yoshua},
	Journal = {The handbook of brain theory and neural networks},
	Number = {10},
	Title = {Convolutional networks for images, speech, and time series},
	Volume = {3361},
	Year = {1995}}

@article{hochreiter1997long,
	Author = {Hochreiter, Sepp and Schmidhuber, J{\"u}rgen},
	Journal = {Neural Computation},
	Number = {8},
	Pages = {1735--1780},
	Title = {Long short-term memory},
	Volume = {9},
	Year = {1997}}

@article{lv2014traffic,
	Author = {Lv, Yisheng and Duan, Yanjie and Kang, Wenwen and Li, Zhengxi and Wang, Fei-Yue},
	Journal = {IEEE Transactions on Intelligent Transportation Systems},
	Number = {2},
	Pages = {865--873},
	Title = {Traffic flow prediction with big data: {A} deep learning approach},
	Volume = {16},
	Year = {2014}}

@inproceedings{wang2019deepfactors,
	Author = {Wang, Yuyang and Smola, Alex and Maddix, Danielle and Gasthaus, Jan and Foster, Dean and Januschowski, Tim},
	Booktitle = {{International Conference on Machine Learning}},
	Pages = {6607--6617},
	Title = {Deep factors for forecasting},
	Year = {2019}}

@book{hyndman2018forecasting,
	Author = {Hyndman, Rob J and Athanasopoulos, George},
	Publisher = {OTexts},
	Title = {Forecasting: principles and practice},
	Year = {2018}}

@article{smyl2018m4,
	Author = {Slawek Smyl},
	Journal = {International Journal of Forecasting},
	Number = {1},
	Pages = {75-85},
	Title = {A hybrid method of exponential smoothing and recurrent neural networks for time series forecasting},
	Volume = {36},
	Year = {2020}}

@inproceedings{vaswani2017attention,
	Author = {Vaswani, Ashish and Shazeer, Noam and Parmar, Niki and Uszkoreit, Jakob and Jones, Llion and Gomez, Aidan N and Kaiser, {\L}ukasz and Polosukhin, Illia},
	Booktitle = {{Advances in Neural Information Processing Systems}},
	Pages = {5998--6008},
	Title = {Attention is all you need},
	Year = {2017}}

@article{salinas2020deepar,
  title={Deep{AR}: Probabilistic forecasting with autoregressive recurrent networks},
  author={Salinas, David and Flunkert, Valentin and Gasthaus, Jan and Januschowski, Tim},
  journal={International Journal of Forecasting},
  volume={36},
  number={3},
  pages={1181--1191},
  year={2020},
  publisher={Elsevier}
}

@article{kingma2014adam,
	Author = {Kingma, Diederick P and Ba, Jimmy},
	Booktitle = {International Conference on Learning Representations (ICLR)},
	Title = {Adam: A method for stochastic optimization},
	Year = {2015}}

@article{callot2017modeling,
	Author = {Callot, Laurent AF and Kock, Anders B and Medeiros, Marcelo C},
	Journal = {Journal of Applied Econometrics},
	Number = {1},
	Pages = {140--158},
	Publisher = {Wiley Online Library},
	Title = {Modeling and forecasting large realized covariance matrices and portfolio choice},
	Volume = {32},
	Year = {2017}}

@article{makridakis2021m5,
  title={The M5 Uncertainty competition: Results, findings and conclusions},
  author={Makridakis, Spyros and Spiliotis, Evangelos and Assimakopoulos, Vassilios and Chen, Zhi and Gaba, Anil and Tsetlin, Ilia and Winkler, Robert L},
  journal={International Journal of Forecasting},
  year={2021},
  publisher={Elsevier}
}

@inproceedings{informer,
  author    = {Haoyi Zhou and
               Shanghang Zhang and
               Jieqi Peng and
               Shuai Zhang and
               Jianxin Li and
               Hui Xiong and
               Wancai Zhang},
  title     = {Informer: Beyond Efficient Transformer for Long Sequence Time-Series Forecasting},
  booktitle = {The Thirty-Fifth {AAAI} Conference on Artificial Intelligence},
  volume    = {35},
  number    = {12},
  pages     = {11106--11115},
  year      = {2021},
}

\appendix

\subsection{Pressure Surface MSE Test}

\begin{figure*}
    \centering
    \includegraphics[width=0.95\linewidth]{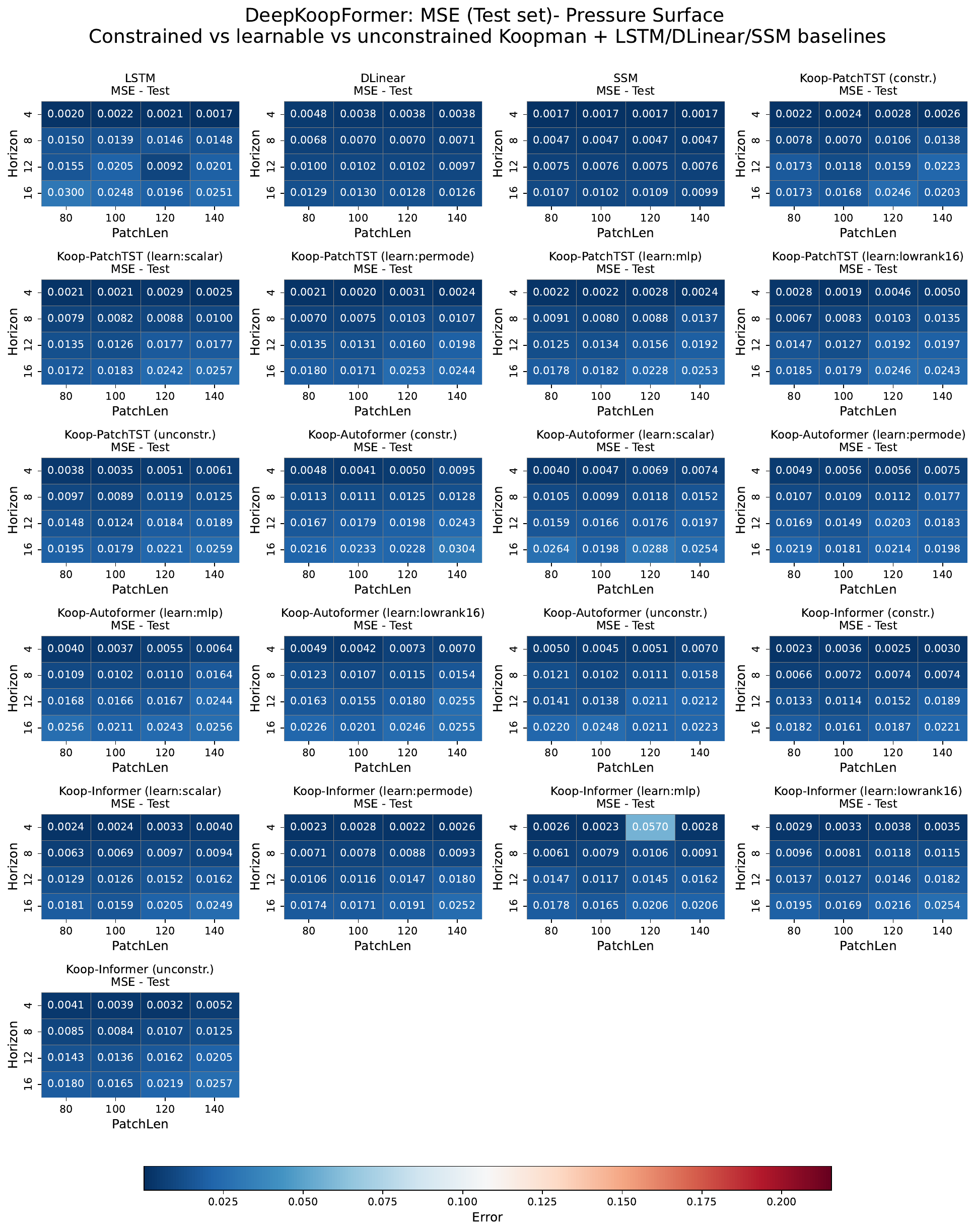}
    %\caption{Caption}
    \label{fig:pressure_mse_error}
\end{figure*}

\subsection{Wind Speed MSE Test}

\begin{figure*}
    \centering
    \includegraphics[width=0.95\linewidth]{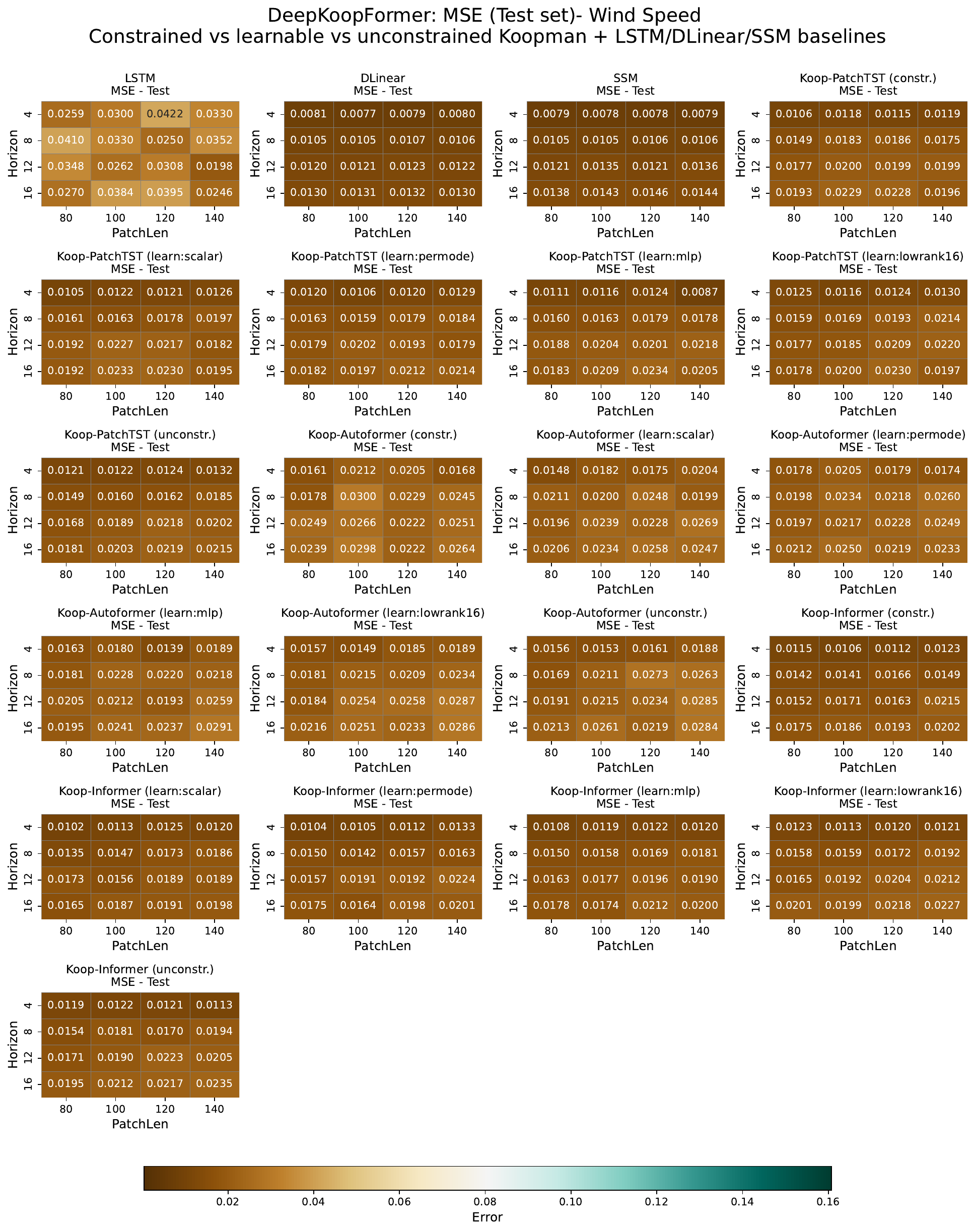}
    %\caption{Caption}
    \label{fig:wind_mse_error}
\end{figure*}

\subsection{Cryptocurrency}

\begin{figure*}
    \centering
    \includegraphics[width=0.95\linewidth]{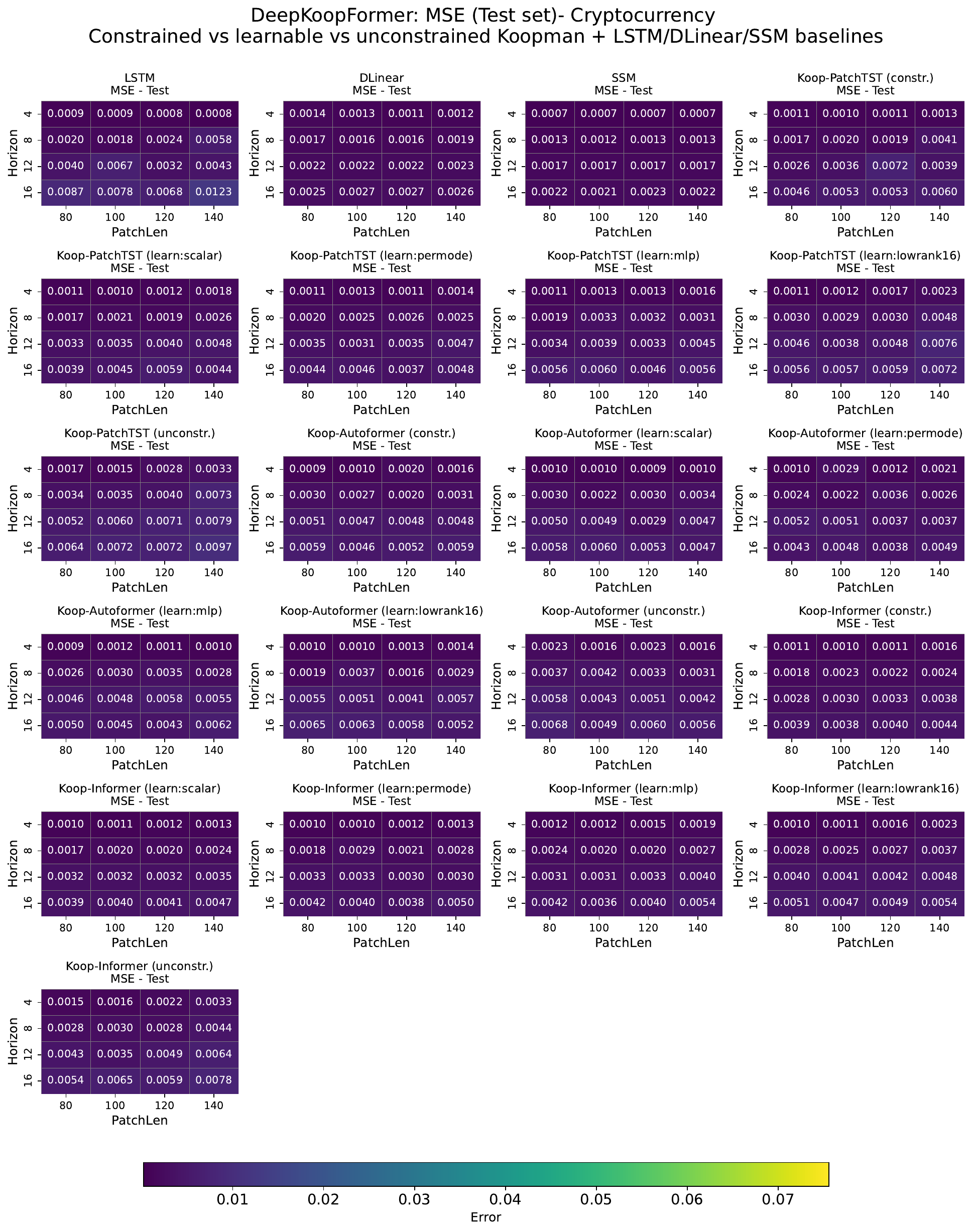}
    %\caption{Caption}
    \label{fig:crypto_mse_error}
\end{figure*}

\subsection{Energy Systems MSE Test}

\begin{figure*}
    \centering
    \includegraphics[width=0.99\linewidth]{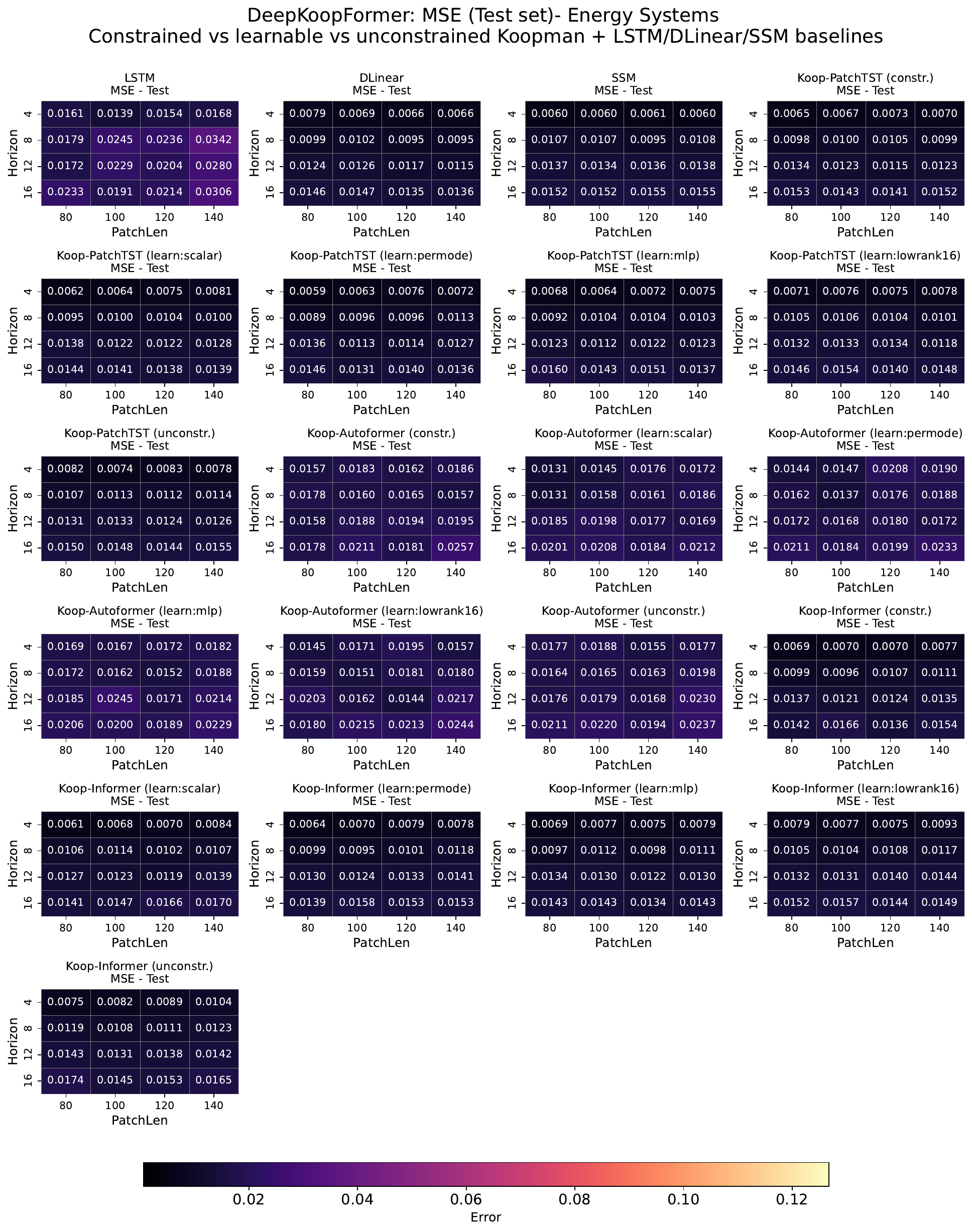}
    %\caption{Caption}
    \label{fig:energy_mse_error}
\end{figure*}

\subsection{ERA5 Wind Speed MSE Test}

\begin{figure*}
    \centering
    \includegraphics[width=0.99\linewidth]{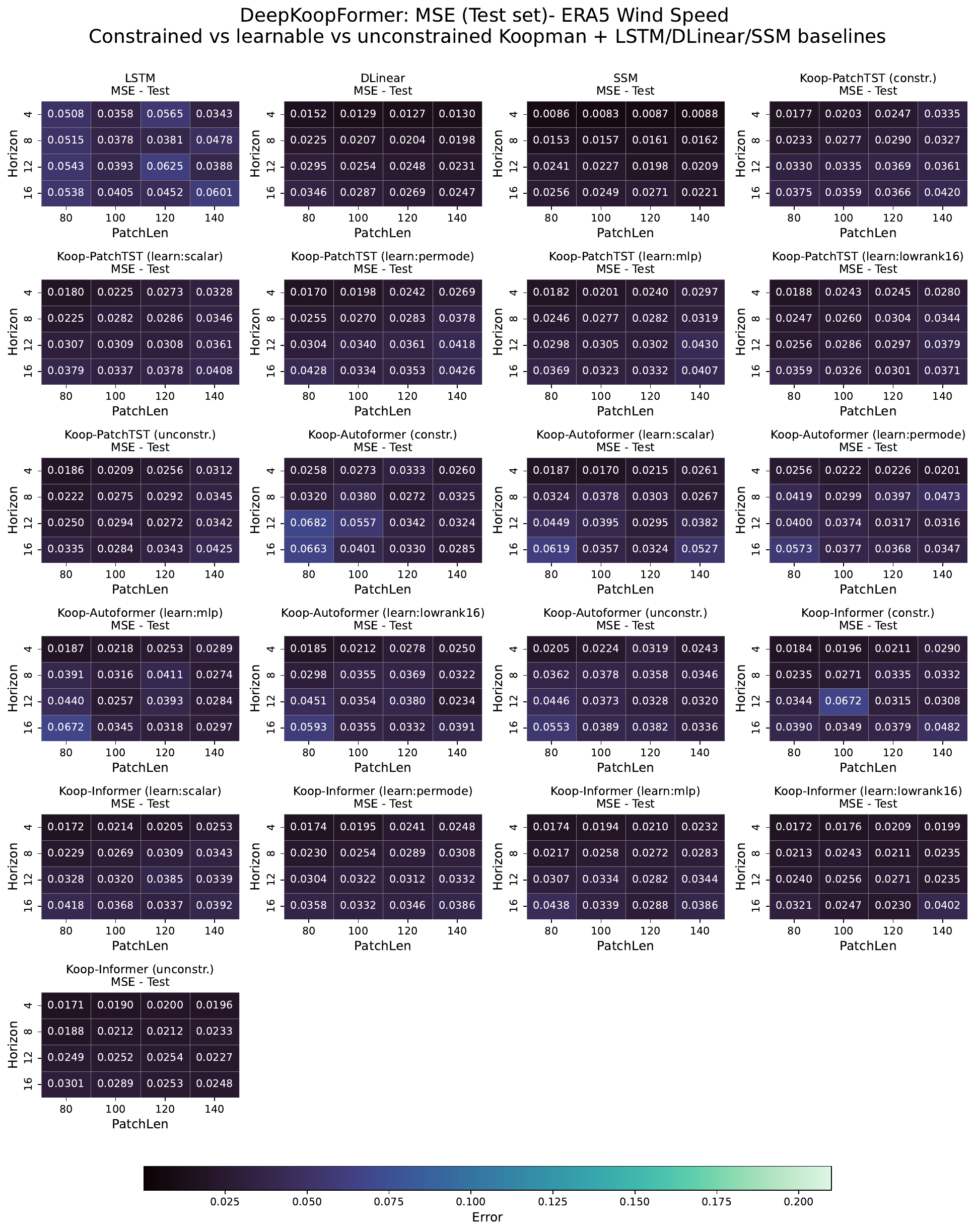}
    %\caption{Caption}
    \label{fig:era5_mse_error}
\end{figure*}

\newpage

%% Biography
%\begin{IEEEbiography}{First A. Author}{\space}(M'76--SM'81--F'87) 
%\end{IEEEbiography}

\end{document}